\newtheorem{theorem}{Theorem}[section]
\newtheorem{lemma}[theorem]{Lemma}
\theoremstyle{definition}
\newtheorem{definition}[theorem]{Definition}
\theoremstyle{remark}
\newcommand{\R}{\mathbb{R}}
\newcommand{\angleb}[1]{\langle #1 \rangle}
\newcommand{\at}[1]{\langle #1 \rangle}
\newcommand{\XX}{\textbf{X}}
\newcommand{\YY}{\textbf{Y}}
\newcommand{\YahooLTR}{\textsc{Yahoo!\,LTR}}
\newcommand{\Grain}{\textsc{Agricultural}}
\DeclareMathOperator*{\argmax}{argmax} 
\title{Efficient Feature Group Sequencing for 
Anytime Linear Prediction}
\author{ {\bf Hanzhang Hu} \\
hanzhang@cs.cmu.edu \\
\And
{\bf Alexander Grubb}  \\
agrubb@cs.cmu.edu \\
\And
{\bf J. Andrew Bagnell}   \\
dbagnell@ri.cmu.edu \\
\And
{\bf Martial Hebert} \\
hebert@ri.cmu.edu
}
\begin{document}

\maketitle

\begin{abstract}

We consider \textit{anytime} linear prediction 
in the common machine learning setting, where features are 
in groups that have costs. We achieve anytime (or interruptible)
predictions by sequencing the computation of feature groups and
reporting results using the computed features at interruption. 
We extend Orthogonal Matching Pursuit (OMP) and Forward 
Regression (FR) to learn the sequencing greedily under 
this group setting with costs. We theoretically guarantee 
that our algorithms achieve near-optimal linear predictions 
at each budget when a feature group is chosen. With a novel 
analysis of OMP, we improve its theoretical bound to the same 
strength as that of FR. In addition, we develop a novel 
algorithm that consumes cost $4B$ to approximate the optimal 
performance of \textit{any} cost $B$, and prove that with 
cost less than $4B$, such an approximation is impossible. 
To our knowledge, these are the first anytime bounds 
at \textit{all} budgets. We test our algorithms on two 
real-world data-sets and evaluate them in terms of anytime 
linear prediction performance against cost-weighted Group 
Lasso and alternative greedy algorithms.


\end{abstract}

\section{INTRODUCTION AND BACKGROUND}

First defined by \cite{anytime}, anytime predictors output 
valid results even if they are interrupted at any point in time. The results improve with resources spent. In this work, we propose an anytime linear prediction algorithm under the common 
machine learning setting, where features are computed in groups with associated costs. We further assume that the cost of 
prediction is dominated by feature computation. Hence, 
we can achieve anytime predictions by computing feature groups
in a specific order and outputting linear predictions 
using only computed features at interruption.

Formally, we are given $n$ samples $(x^i, y^i)$ from 
a feature matrix $X \in \mathbb{R}^{n \times D}$ and a response vector $Y \in \mathbb{R}^n$. We also have a partition of the
$D$ feature dimensions into $J$ feature groups, 
$\mathcal{G}_1, \mathcal{G}_2, ..., \mathcal{G}_J$, and 
an associated cost of each group $c(\mathcal{G}_j)$. Our anytime prediction approach learns a sequencing of the
feature groups, $G$ = $g_1, g_2,..., g_J$. 
For each budget limit $B$, the computed
groups at cost $B$ is a prefix of the sequencing, $G_{\angleb{B}} = g_1, g_2,.., g_{J_{\angleb{B}}}$, 
where 
$J_{\angleb{B}} = \max \{ j\leq J | \sum _{i\leq j} c(g_i) \leq B \}$ indexes the last group within the budget $B$. 
An ideal anytime algorithm seeks a sequencing $G$ to minimize risk at all budgets $B$:
\begin{align}
\label{eq:risk}
R(G_{\angleb{B}}) :=   \min _{w}
    \frac{1}{2n} \Vert Y - X_{{G_{\angleb{B}}}} {w} \Vert_2^2 + \frac{\lambda}{2} \Vert w\Vert_2^2,
\end{align}
where $X_{G_{\angleb{B}}}$ contains features in $G_{\angleb{B}}$, $w$ is the associated linear predictor coefficient, and $\lambda$ is a regularizing constant.
Equivalently, if we assume that the $y^i$'s have unit variance and zero mean by normalization, we can maximize the explained variance, 
$    \label{eq:exp-var}
    F(G_{\angleb{B}}) := \frac{1}{2n} Y^TY - R(G_{\angleb{B}}). $

The above optimization problem is closest to the problem of subset selection 
for regression \citep{kemp}, which selects at most $k$ features to optimize a 
linear regression. The problem is also similar to that of sparse model recovery 
\citep{lasso}, which recovers coefficients of a true linear model. 
One common approach to these two problems is to select the features greedily 
via Forward Regression (FR) \citep{FR} or Orthogonal Matching Pursuit (OMP) \citep{omp}. 
Forward Regression greedily selects features 
that maximize the marginal increase in explained 
variance at each step. 
Orthogonal Matching Pursuit selects features as follows. The linear model coefficients of the unselected 
features are set to zero. At each step, the feature whose model 
coefficient has the largest gradient of the risk is selected. 
In this work, we extend FR and OMP to the setting where features are in  
groups that have costs. The extension to FR is intuitive: we
only need to select feature groups using their marginal gain in 
objective per unit cost instead of using just the marginal gain. However, we have
two notes about the extension to OMP. First, to incorporate feature
costs, we need to evaluate a feature based on the squared norm of 
the associated weight vector gradient per unit cost instead of just the gradient norm. 
Second, when we compute the gradient norm for a feature group, $\nabla_g$, 
we have to use the norm $\nabla_g ^T (X_g^TX_g)^{-1} \nabla_g$, which is 
$\Vert \nabla_g \Vert_2^2$ if and only
if each feature group $g$ is whitened, which is an assumption 
in group OMP analysis by \citet{gomp, log_gomp}. Our analysis 
sheds light  on why this assumption is important in a group setting. 
Like previous analyses of greedy algorithms by \cite{streeter:08},
our analysis guarantees that our methods produce near-optimal linear predictions, measured by explained variance, 
at budgets where feature groups are selected. Thus, they exhibit the desired anytime behavior at those budgets. Finally, 
we extend our algorithm to account for \textit{all} budgets and show 
a novel anytime result: for any budget $B$, 
if \textit{OPT} is the optimal explained 
variance of cost $B$, then our proposed sequencing can approximate 
within a factor of \textit{OPT} with cost at most $4B$. 
Furthermore, with a cost less than $4B$,
a fixed sequence of predictors cannot approximate \textit{OPT} in general. 
To our knowledge, these are the first  anytime performance bounds
at all budgets.

In previous works, both FR and OMP are theoretically analyzed for both 
the problem of subset selection and model recovery. 
\cite{kemp} cast the subset selection problem as a submodular 
maximization that 
selects a set $S$ with $|S| \leq k$ to maximize 
the explained variance and prove 
that FR and OMP achieve $(1-e^{-\lambda^*})$ and $(1-e^{-{\lambda^*}^2})$ 
near-optimal explained variance, where $\lambda^*$ is the minimum eigenvalue of the sample covariance, $\frac{1}{n}X^TX$.
We can adopt these previous analyses to our extensions to FR and OMP under
the group setting with costs and produce
the same near-optimal results. We also present a novel analysis of 
OMP that leads to the same near-optimal factor $(1-e^{-\lambda^*})$ as that of FR.
Works on model recovery have also analyzed FR and OMP. \cite{zhang:2009} proves 
that OMP discovers the true linear model coefficients, if they exist. 
This result 
was then extended by \citep{gomp, log_gomp} to the setting of feature 
groups using generalized linear models. However, we note that these
theoretical analyses of model recovery 
assume that a true model exists. They focus on recovering
model coefficients rather than directly analyzing prediction
performance.

Besides greedy selection, another family of approaches to
find the optimal subset $S$ that minimizes $R(S)$ is to
relax the NP-hard selection problem as a convex optimization. 
Lasso \citep{lasso}, a well-known method, uses $L_1$ regularization
to force sparsity in the linear model. To get an ordering of the
features, compute the Lasso solution path by varying 
the $L_1$ regularization constant. Group Lasso \citep{group_lasso} extends Lasso to the group setting, replacing the $L_1$ norm with 
the sum of $L_2$ norms of feature groups. Group Lasso can also 
incorporate feature costs by scaling the
$L_2$ norms of feature groups. 
Lasso-based methods are generally analyzed for model recovery,  
not prediction performance. We demonstrate experimentally
that our greedy methods achieve better prediction
performance than cost-weighted Group Lasso.

Various works have addressed anytime prediction previously. 
The most well-known family of approaches 
use \textit{cascades} \citep{cascade}, which achieve 
anytime prediction by filtering out samples 
with a sequence of classifiers of increasing complexity 
and feature costs. 
At each stage, cascade methods 
\citep{sochman:05, brubaker:07, lefakis:10, xu:14, cai:15} 
typically achieve a target accuracy and assign a portion of samples
with their final predictions. While this design frees up computation for 
the more difficult samples, it prevents recovery from early 
mistakes. Most cascade methods select features of each 
stage before being trained. Although the more recent works start to learn feature sequencing, the learned sequences
are the same as those of cost-weighted Group Lasso \citep{chen:12} 
and greedy methods \citep{cai:15} when they are 
restricted to linear prediction. Hence our study of anytime 
linear prediction can help cascade methods choose features and
learn cascades. 
Another branch of anytime prediction methods uses boosting. It outputs as results partial sums of the ensemble \citep{speedboost} or averages of randomly sampled weak learners \citep{reyzin:11}. Our greedy methods can be 
viewed as a gradient boosting scheme by treating each feature 
as a weak learner. 
Some works approach anytime prediction with feature transformations \citep{xu:12, xu:13b} and learn cost-sensitive, non-linear transformation of features for linear classification. Similarly, \cite{weinberger09feature} hashes high dimensional features to low dimensional subspaces. These approaches operate on readily-computed features, which is orthogonal to our problem setting. 
\cite{timeliness} models the anytime prediction as a Markov Decision Process and learns a policy of applying intermediate learners and computing features through reinforcement learning.

\paragraph*{Contributions}
\begin{itemize}[leftmargin=*]
\setlength\itemsep{1em}
\item We cast the problem of anytime linear prediction 
as a feature group sequencing problem  
and propose extensions to FR and OMP under the setting where features are in
groups that have costs. 
\item We theoretically analyze our extensions to FR and OMP 
and show that they both achieve $(1-e^{-\lambda^*})$ near-optimal 
explained variance with linear predictions at budgets when 
they choose feature groups.
\item We develop the first anytime algorithm 
that provably approximates the optimal performance
of \textit{all} budgets $B$ with cost of $4B$; we also prove it 
impossible to achieve a constant-factor approximation with cost less than $4B$. 
\end{itemize}

\section{COST-SENSITIVE GREEDY METHOD}
\label{sec:method}

This section formally introduces our extensions to FR and OMP to 
the group setting with costs. 
We assume that all feature dimensions and responses are normalized to 
have zero mean and unit variance. 
We define the regularized feature covariance matrix as 
$C := \frac{1}{n}X^TX + \lambda I_D$. Let $C_{st}$ be the sub-matrix that selects rows from $s$ and columns from $t$. Let $C_S$ be short for $C_{SS}$. 
Given a non-empty union of selected feature groups $S$, the maximum explained variance 
$F(S)$ is achieved with the regularized optimal 
coefficient 
\mbox{$w(S) = \frac{1}{n}(\frac{1}{n}X_S^TX_S + \lambda I)^{-1}(X_S^TY) = 
    \frac{1}{n} C_S^{-1}X_S^TY$}.
When we take gradient of $F(S)$ with respect to the coefficient 
of a feature group $g$, if $g \subseteq S$ then the gradient is
\mbox{$\nabla_g F(S) = \frac{1}{n} X_g^T(Y-X_Sw(S)) - \lambda w(S)_g
$}; if 
$g \cap S  =\emptyset$ then we can extend $w(S)$ to dimensions of $g$, setting $w(S)_g = 0$, and then take the gradient to have 
\mbox{$\nabla_g F(S) =\frac{1}{n} X_g^T(Y-X_Sw(S))$}. In both cases,
we have $\nabla_g F(S) = \frac{1}{n} X_g^TY - C_{gS}w(S)$. We further shorten 
the notations by defining $b_g^{S} = \nabla _g F(S)$. 
If $S$ is empty, we assume that coefficient $w(\emptyset)$ has zero for all features so that $F(\emptyset) = 0$. 
When $S = s_1, s_2,...,$ is a sequence of feature groups, we define
$S_j$ to be the prefix sequence $s_1, s_2,..., s_j$. We 
overload notations of a sequence $S$ so that $S$ also represents 
the union of its groups in notations such as $F(S)$, $w(S)$, $C_S$ and $b_S^S$.

\label{sec:algo_description}

\IncMargin{1em}
\begin{algorithm}
\caption{Cost Sensitive Group Orthogonal Matching Pursuit (CS-G-OMP)}
  \label{algo:gomp_lm}
  \SetKwInOut{Input}{input}\SetKwInOut{Output}{output}

  \Input{The normalized feature matrix $X \in \R^{n \times D}$.
    The normalized response vector $Y \in \mathbb{R}^{n}$, which has 
    a zero mean and unit variance. 
    Feature groups $\mathcal{G}_1, ... \mathcal{G}_J$ that
    partition $\{1,..,D\}$, and group costs $c(\mathcal{G}_j)$.
    Regularization constant $\lambda$.
  }
  \Output{
    A sequence $G = g_1, g_2, ..., g_{J}$ of feature groups.
    For each $j \leq J$, a coefficient 
        $w(G_j)$ for the prefix sequence $G_j = g_1,..., g_j$. 
  }

  $G_0 = \emptyset$\;
  \For{$j = 1, 2, ..., J$}{
  	\tcp{Learn linear model}
    compute $w(G_{j-1}) = \frac{1}{n}C_{G_{j-1}}^{-1}X_{G_{j-1}}^TY$\;
    \tcp{Selection step (*)}
    For each $g \notin G_{j-1}$, compute
        \quad $b_g = \nabla _g F(G_{j-1}) = \frac{1}{n} X_g^T(Y-X_{G_{j-1}}w)$ \;
    $g_{j} = \argmax \limits_{g = \mathcal{G}_1, ... , \mathcal{G}_J, g\notin G_{j-1}}  
    \frac{b_g ^T (X_{g}^TX_g)^{-1}b_g}{ c(g) }$\;
    $G_{j} = G_{j-1} \oplus g_{j}$\;
  }
  compute $w(G_J)$\;
\end{algorithm}

In Algorithm~\ref{algo:gomp_lm}, we present Cost-Sensitive Group Orthogonal Matching Pursuit (CS-G-OMP), which learns a near-optimal sequencing of the
feature groups for anytime linear predictions. 
The feature groups are selected greedily. At the $j^{th}$ selection step $(*)$, we have chosen $j-1$ groups,
$G_{j-1} = g_1, g_2, ..., g_{j-1}$, and have computed 
the best model using $G_{j-1}$, $w(G_{j-1})$. 
To evaluate a feature group $g$,
we first compute the gradient $b_g = \nabla _g F(G_{j-1})$ of the 
explained variance $F$ with respect to the coefficients of $g$.
Then, we evaluate it with the whitened gradient $L_2$-norm square per unit cost, 
$\frac{b_g ^T (X_{g}^TX_g)^{-1}b_g}{ c(g) }$. We select the group $g$ that 
maximizes this value as $g_j$, and continue until all groups
are depleted. At test time, our proposed anytime prediction algorithm computes the feature groups in the order of $G = g_1, g_2, ..., g_J$. After each feature group $g_j$ is available, we can compute and store prediction $\hat{y} = x^Tw(G_j)$ because we assumed that 
the costs of feature generation dominate the computations of linear predictions. At interruption,
we can then report the latest prediction $\hat{y}$. 

The learning procedure extending from Forward Regression is similar to
Algorithm~\ref{algo:gomp_lm}: we compute the linear models 
$w(G_{j-1} \oplus g)$ at line 4 instead of the 
gradients $b_g$ and replace the selection criterion $\frac{b_g ^T (X_{g}^TX_g)^{-1}b_g}{ c(g) }$ at line 5 with the marginal gain in explained 
variance per unit cost, $\frac{F(G_{j-1} \oplus g) - F(G_{j-1}) }{c(g)}$. We call this cost-sensitive FR extension as CS-G-FR.

Before we theoretically analyze our greedy methods
in the next section, we provide an argument 
why \textbf{group whitening} at line 5 of Algorithm~\ref{algo:gomp_lm} 
is natural. OMP greedily selects features whose
coefficients have the largest gradients of the objective function. 
In linear regression, the gradient for a feature $g$ 
is the inner-product of $X_g$ and the prediction 
residual $Y-\hat{Y}$. Hence OMP selects features that best reconstruct the 
residual. From this perspective, OMP under group setting 
should seek the feature group whose span contains the largest projection of the residual. 
Let the projection to feature group $g$ be 
$P_g = X_g(X_g^TX_g)^{-1}X_g^T$ and recall projection matrices are
idempotent. We observe that the criterion for CS-G-OMP selection step is
$\frac{\Vert P_g (Y - \hat{Y}) \Vert _2^2 }{c(g)}$, i.e, a cost-weighted
norm square of the projection of the residual onto a feature group. The 
name group whitening is chosen because the criterion is 
$\frac{\Vert b_g \Vert_2^2}{c(g)}$ if and only if feature groups are whitened. We assume \textit{feature groups are whitened} in our formal analysis to make the criterion easier to analyze. 

Besides the above greedy criterion, one may suggest other approaches 
to evaluate gradient vectors $b_g$ for group $g$. For example, 
$L_2$ norm and $L_{\infty}$ norm can be used to 
achieve greedy criteria $\frac{\Vert b_g \Vert_2^2}{c(g)}$ and 
$\frac{\Vert b_g \Vert ^2_{\infty}}{c(g)}$, respectively. 
The former criterion forgoes group whitening, so we call it \textit{no-whiten}.
Thus, it overestimates a feature group that has correlated 
but effective features, an extreme example of which is a
feature group of identical but effective features. The latter
criterion evaluates only the best feature of each feature group, so we call it \textit{single}. Thus, it
underestimates a feature group that has a descriptive 
feature span but no top-performing individual feature dimensions.
We will show in experiments that no-whiten and single
are indeed inferior to our CS-G-OMP choice. 



\section{THEORETICAL ANALYSIS}
\label{sec:proof}

This section proves that CS-G-FR and CS-G-OMP 
produce near-optimal explained variance $F$ at budgets 
where features are selected. The main challenge of our analysis is to prove Lemma~\ref{lemma:main},
which is a common stepping stone in 
submodular maximization analysis, e.g., Equation 8 in \citep{submodular}. The main Theorem~\ref{thm:main} follows from the lemma by standard techniques, which we defer to the appendix. 

\begin{lemma}[main]
  Let $G_j$ be the first $j$ feature groups selected by our greedy algorithm. There exists a constant $\gamma = \frac{\lambda^* + \lambda}{1 +\lambda} > 0$ such that for any sequence $S$, total cost $K$, and indices $j=1,2,..., J$, 
  \mbox{$
    F(S_{\angleb{K}}) - F(G_{j-1}) \leq \frac{K}{\gamma}
      \lbrack \frac{F(G_j) - F(G_{j-1})}{c(g_j)} \rbrack.
  $}
  \label{lemma:main}
\end{lemma}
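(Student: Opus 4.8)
The plan is to bound the competitor gap $F(S_{\angleb{K}}) - F(G_{j-1})$ from above by a multiple of the per-cost marginal gain of the greedy step, by passing through the squared group-gradient norms $\Vert b_g^{G_{j-1}} \Vert_2^2$ defined in Equation~\ref{eq:bG}. The two ingredients are exactly Lemma~\ref{lemma:convexity} and Lemma~\ref{lemma:smoothness}: the former converts the competitor gap into an \emph{upper} bound of the form $\frac{1}{2\lambda_{min}(C)} \sum_{g} \Vert b_g^{G_{j-1}}\Vert_2^2$ (summed over the groups of $S_{\angleb{K}}$ not already in $G_{j-1}$), and the latter converts the greedy gain into a matching \emph{lower} bound $F(G_j) - F(G_{j-1}) \ge \frac{1}{2(1+\lambda)} \Vert b_{g_j}^{G_{j-1}} \Vert_2^2$. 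Once both bounds are in hand, the greedy selection rule $(*)$ together with a cost-counting argument splices them, and the curvature constants combine into $\gamma = \lambda_{min}(C)/(1+\lambda)$.

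For the convexity (upper) bound I would first use monotonicity of $F$: since enlarging the active set can only lower the restricted risk, $F(S_{\angleb{K}}) \le F(S_{\angleb{K}} \cup G_{j-1})$, so it suffices to bound $F(S_{\angleb{K}} \cup G_{j-1}) - F(G_{j-1})$. Both $w(G_{j-1})$ and the minimizer on $S_{\angleb{K}} \cup G_{j-1}$ live in the coordinate subspace of $S_{\angleb{K}} \cup G_{j-1}$, on which $\textbf{r}$ is $\lambda_{min}(C)$-strongly convex; minimizing the strong-convexity inequality anchored at $w(G_{j-1})$ bounds the achievable decrease by $\tfrac{1}{2\lambda_{min}(C)}$ times the squared norm of the gradient of $\textbf{r}$ restricted to that subspace. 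The key simplification is that $w(G_{j-1})$ is already optimal on $G_{j-1}$, so the gradient blocks indexed by groups inside $G_{j-1}$ vanish and only the blocks $b_g^{G_{j-1}}$ with $g \subseteq S_{\angleb{K}}$, $g \notin G_{j-1}$ survive; block-additivity of the Euclidean norm then turns this into $\sum_g \Vert b_g^{G_{j-1}}\Vert_2^2$.

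For the smoothness (lower) bound I would exploit the \textbf{Group Whitened} assumption: the diagonal block of $C$ on group $g_j$ equals $(1+\lambda)I$, so $\textbf{r}$ is $(1+\lambda)$-smooth along $g_j$'s directions, and a single optimal step along those directions — dominated by the joint re-optimization producing $w(G_j)$ — already yields the stated lower bound, with whitening being exactly what keeps this constant at $(1+\lambda)$ rather than inflating to the group dimension. Finally I would combine the pieces: the greedy rule $\Vert b_{g_j}^{G_{j-1}}\Vert_2^2 / c(g_j) \ge \Vert b_g^{G_{j-1}}\Vert_2^2 / c(g)$ lets me replace every term in the convexity sum, and since $\sum_{g \subseteq S_{\angleb{K}}} c(g) = K$ the sum collapses into $\frac{K}{2\lambda_{min}(C)} \cdot \Vert b_{g_j}^{G_{j-1}}\Vert_2^2 / c(g_j)$; substituting the smoothness bound for $\Vert b_{g_j}^{G_{j-1}}\Vert_2^2$ reduces the constant to $K/\gamma$. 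I expect the main obstacle to be the convexity step — correctly justifying the subspace restriction together with the vanishing of the $G_{j-1}$-gradient blocks, and tracking the $\frac{1}{n}$ and $\lambda$ normalization inside $b_g^{G_{j-1}}$ so that $\lambda_{min}(C)$ and $(1+\lambda)$ emerge cleanly; the smoothness direction and the cost-counting are comparatively routine.
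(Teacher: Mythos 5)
Your proposal is correct and follows essentially the same route as the paper's proof: upper-bound the competitor gap by $\frac{1}{2\lambda_{min}(C)}\sum_{s_i}\Vert b_{s_i}^{G_{j-1}}\Vert_2^2$ (exploiting that the gradient blocks on $G_{j-1}$ vanish, with your direct interlacing argument on the principal submatrix replacing the paper's block-inverse/Schur-complement plus spectral step), lower-bound the greedy gain by $\frac{1}{2(1+\lambda)}\Vert b_{g_j}^{G_{j-1}}\Vert_2^2$ via group whitening, and splice the two with the selection rule $(*)$ and $\sum_{s_i} c(s_i)=K$. The one caveat is purely a labeling one: you attribute the competitor-gap upper bound to Lemma~\ref{lemma:convexity} and the greedy-gain lower bound to Lemma~\ref{lemma:smoothness}, which is the reverse of how the paper names them (its Lemma~\ref{lemma:smoothness} is the gap bound and its Lemma~\ref{lemma:convexity} is the step bound), although your assignment of convexity to the gap and smoothness to the step is the standard convention and is in fact the one the quadratic bounds in the paper's appendix proofs actually use.
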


\begin{theorem}
Let $B = \sum _{i=1}^L c(g_i)$ for some $L$.  
There exists a constant  
  $\gamma = \frac{\lambda^* + \lambda}{1+\lambda}$, 
  such that
for any sequence $S$ and total cost $K$, 
\mbox{$
  F(G_{\angleb{B}}) > (1 - e^{-\gamma\frac{B}{K}})F(S_{\angleb{K}}).
$}
\label{thm:main}
\end{theorem}

%
Before delving into the proof of Lemma~\ref{lemma:main}, we first discuss 
some implications of Theorem~\ref{thm:main}, which 
argues that the explained variance of greedily selected
features of cost $B$ is within $(1-e^{\gamma \frac{B}{K}})$-factor
of that of any competing feature sequence of cost $K$.
If we apply minimum regularization $(\lambda \rightarrow 0)$, then 
the constant $\gamma$ approaches $\lambda^*$. The resulting bound factor $(1-e^{ - \lambda^* \frac{B}{K}})$ is the bound for FR by \cite{kemp}. However, we achieve the same bound for OMP, improving
theoretical guarantees of OMP. We also note that less-correlated features lead
to a higher $\lambda^*$  and a stronger bound.


Lemma~\ref{lemma:main} for CS-G-FR is standard if we follow proofs in \citep{streeter:08} and \citep{kemp} because the objective $F$ is $\gamma$-approximately submodular. 
However, we present a proof of 
Lemma~\ref{lemma:main} for CS-G-OMP without approximate submodularity to achieve the same constant $\gamma$. 
This proof in turn uses Lemma~\ref{lemma:smoothness} and Lemma~\ref{lemma:convexity}, whose proofs are based on the Taylor expansions of the regularized risk $\mathcal{R}[f_S]=R(S)$, a $M$-strongly smooth and $m$-strongly convex loss functional of predictors $f(x) = w^T x$.
We defer these two proofs to the appendix and note that 
$M=m$ with our choice of $R$.

\begin{lemma}[Using Smoothness]
  Let $S$ and $G$ be some fixed sequences. Then
  \mbox{$
    F(S) - F(G) \leq \frac{1}{2m} \angleb{b^G_{G \oplus S}, C_{G \oplus S}^{-1} b^G_{G\oplus S}}.
  $}
  \label{lemma:smoothness}
\end{lemma}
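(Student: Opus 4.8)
The plan is to establish the bound in two moves: first reduce $F(S)-F(G)$ to the risk gap between $G$ and the combined support $G\oplus S$, and then evaluate that gap through the (exact) second-order expansion of the quadratic risk. By the definition $F(\cdot)=R(\emptyset)-R(\cdot)$ we have $F(S)-F(G)=R(G)-R(S)$. Since $G\oplus S\supseteq S$, enlarging the support can only lower the minimal risk, so $R(G\oplus S)\le R(S)$, and hence $F(S)-F(G)\le R(G)-R(G\oplus S)$. It therefore suffices to bound the one-sided gap $R(G)-R(G\oplus S)$ by $\tfrac12\angleb{b^G_{G\oplus S},\,C_{G\oplus S}^{-1}b^G_{G\oplus S}}$; I expect to obtain this with equality, the inequality in the lemma arising solely from the monotonicity step above.

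For the second move, write $V=G\oplus S$ and recall that $\textbf{r}$ is an exactly quadratic function whose Hessian is the constant matrix $C=\tfrac1n X^TX+\lambda I$. Restricting attention to coefficient vectors supported on $V$, the map $u\mapsto\textbf{r}(u)$ is quadratic with Hessian equal to the principal submatrix $C_{V}$, so its Taylor expansion about $w(G)$ is exact: for any step $\delta$ supported on $V$, $\textbf{r}(w(G)+\delta)=\textbf{r}(w(G))+\angleb{\nabla_V\textbf{r}(w(G)),\delta}+\tfrac12\,\delta^T C_V\,\delta$. The key identifications are that the restricted gradient $\nabla_V\textbf{r}(w(G))$ equals $-b^G_{V}$ — its block indexed by $G$ vanishes by the first-order optimality of $w(G)$ on support $G$, while its block indexed by $S\setminus G$ reproduces the explained-variance gradient of Equation~\ref{eq:bG} — and that the curvature seen along $V$ is the submatrix $C_V=C_{G\oplus S}$.

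Minimizing this exact quadratic over all steps $\delta$ supported on $V$ gives the minimizer $\delta^\star=C_V^{-1}b^G_V$ and the optimal value $\textbf{r}(w(G)+\delta^\star)=\textbf{r}(w(G))-\tfrac12\angleb{b^G_{V},C_V^{-1}b^G_{V}}$. Because the expansion is exact and $w(G)+\delta^\star$ is supported on $V$, this optimal value is precisely $R(G\oplus S)$, so that $R(G)-R(G\oplus S)=\tfrac12\angleb{b^G_{G\oplus S},C_{G\oplus S}^{-1}b^G_{G\oplus S}}$. Combining with the monotonicity reduction then proves the lemma.

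The step I expect to require the most care is the bookkeeping in the second paragraph: verifying that the gradient restricted to $V$ is exactly $b^G_{G\oplus S}$ with a vanishing $G$-block, and that the restricted minimization produces the inverse of the principal submatrix $C_{G\oplus S}^{-1}$ rather than a submatrix of $C^{-1}$. The only genuinely quadratic-specific ingredient is the exactness of the expansion; this is where the loss being simultaneously strongly smooth and strongly convex with the single curvature matrix $C$ is used, and it is precisely this coincidence that will separate into two distinct one-sided inequalities (smoothness and convexity) once the argument is carried over to the generalized-linear-model setting.
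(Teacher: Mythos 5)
Your proposal is correct and takes essentially the same route as the paper's own proof (given in general form as Lemma~\ref{lemma:smoothness_glm}): reduce to the enlarged support $G\oplus S$ via monotonicity of the minimal risk, then expand the risk to second order around the current minimizer and solve the resulting quadratic in $\delta$ in closed form, yielding $\tfrac12\angleb{b^G_{G\oplus S},C_{G\oplus S}^{-1}b^G_{G\oplus S}}$. The only differences are presentational --- the paper works in a functional-boosting formulation and keeps the expansion as a one-sided strong-smoothness inequality so that the same proof covers generalized linear models, whereas you use exactness of the quadratic in the linear case, a distinction you correctly flag yourself.
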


\begin{lemma}[Using Convexity] For $j = 1,2,..., J$, 
    \mbox{$
      F(G_j) - F(G_{j-1}) \geq \frac{1}{2M} \angleb{ {b^{G_{j-1}}_{g_j}}, C_{g_j}^{-1}b^{G_{j-1}}_{g_j} }.
    $}
  \label{lemma:convexity}
\end{lemma}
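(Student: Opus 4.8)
The plan is to prove Lemma~\ref{lemma:convexity} directly, by exhibiting one explicit feasible coefficient vector supported on $G_j = G_{j-1}\oplus g_j$ whose risk already lies at least $\frac{1}{2(1+\lambda)}\angleb{b^{G_{j-1}}_{g_j}, b^{G_{j-1}}_{g_j}}$ below $R(G_{j-1})$. Since $R(G_j)$ is by definition the \emph{minimum} risk over all $w$ supported on $G_j$, any feasible candidate upper-bounds $R(G_j)$, and the claim then follows immediately from the identity $F(G_j)-F(G_{j-1}) = R(G_{j-1}) - R(G_j)$.

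First I would reduce the problem to a single block. Starting from $w(G_{j-1})$, which assigns zero weight to $g_j$ because $g_j\notin G_{j-1}$, I would perturb only the coordinates of the freshly added group, defining \mbox{$\phi(v) \triangleq \textbf{r}(w(G_{j-1}) + v)$} for $v$ supported on $g_j$. As $\textbf{r}$ is a regularized quadratic, $\phi$ is an exact quadratic in $v$. Its gradient at $v=0$ along $g_j$ equals $-b^{G_{j-1}}_{g_j}$: the identity $b^{G_{j-1}}_{g_j} = X_{g_j}^T(Y - X_{G_{j-1}}w(G_{j-1}))$ holds precisely because the $-\lambda w(G_{j-1})_{g_j}$ term in Equation~\ref{eq:bG} vanishes for a newly selected group. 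Its Hessian is the block covariance $\frac{1}{n}X_{g_j}^T X_{g_j} + \lambda I$, which by the group-whitening assumption collapses to exactly $(1+\lambda)I$.

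Thus \mbox{$\phi(v) = R(G_{j-1}) - \angleb{b^{G_{j-1}}_{g_j}, v} + \frac{1+\lambda}{2}\Vert v\Vert_2^2$}, and I would take the unconstrained block minimizer \mbox{$v^\star = \frac{1}{1+\lambda} b^{G_{j-1}}_{g_j}$}, which gives \mbox{$\phi(v^\star) = R(G_{j-1}) - \frac{1}{2(1+\lambda)}\angleb{b^{G_{j-1}}_{g_j}, b^{G_{j-1}}_{g_j}}$}. Because $w(G_{j-1}) + v^\star$ is supported on $G_j$, it is feasible for the minimization defining $R(G_j)$, so $R(G_j)\le \phi(v^\star)$, and rearranging yields the lemma. (Equivalently, one may phrase this as strong smoothness of $\textbf{r}$ restricted to the block, but since the restriction is an exact quadratic the minimization is explicit and no inequality is lost here.)

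The main thing to get right, rather than a deep obstacle, is the constant, which is exactly where group whitening enters: only under $X_{g_j}^T X_{g_j}\propto I$ does the block Hessian become a scalar multiple of the identity, producing the clean factor $\frac{1}{2(1+\lambda)}$. Without whitening the block Hessian is $\frac{1}{n}X_{g_j}^TX_{g_j}+\lambda I$, whose relevant eigenvalue can be as large as $D_{g_j}+\lambda$, degrading the bound exactly as the paper's remark anticipates. Care must also be taken to keep the $\frac{1}{n}$ normalization of $\textbf{r}$, of $C$, and of the whitening condition mutually consistent, so that the block Hessian is literally $(1+\lambda)I$ and the gradient $\nabla\phi(0)$ matches $-b^{G_{j-1}}_{g_j}$ without a stray factor; this bookkeeping is the only place the argument could go wrong, as everything else is a one-dimensional-in-disguise exact quadratic minimization.
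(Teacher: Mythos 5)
Your proposal is correct and follows essentially the same route as the paper's proof of Lemma~\ref{lemma:convexity_glm}: restrict the minimization defining $R(G_j)$ to perturbations supported on the new block $g_j$, expand the (exactly quadratic, in the linear case) risk around $w(G_{j-1})$, plug in the explicit block minimizer, and use group whitening to turn the block Hessian into $(1+\lambda)I$. The only cosmetic difference is that the paper phrases the quadratic upper bound as $M$-strong smoothness in its functional-boosting notation (with $M=1$ for linear regression), whereas you exploit that the quadratic is exact; your remark about keeping the $\frac{1}{n}$ normalization of $\textbf{r}$, $C$, and the whitening condition consistent is exactly the right caveat.
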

Note that in Lemma~\ref{lemma:convexity}, since we assume feature groups are 
whitened, then $C_{g_j} = (1+\lambda) I$. The bound of the lemma becomes
$F(G_j) - F(G_{j-1}) \geq \frac{1}{2M (1+\lambda)} \angleb{ {b^{G_{j-1}}_{g_j}}, b^{G_{j-1}}_{g_j} }$. If feature groups are not whitened, 
the constant $(1+\lambda)$ can be scaled up to $(|\mathcal{G}_j| + \lambda)$, 
which detriments the strength of Theorem~\ref{thm:main} especially when feature 
groups are large.

\begin{proof} (of Lemma~\ref{lemma:main}, using Lemma~\ref{lemma:smoothness} and Lemma~\ref{lemma:convexity}) \\
  Using Lemma ~\ref{lemma:smoothness}, on $S_{\angleb{K}}$ and $G_{j-1}$, we have: 
  \begin{align}
    &F(S_{\angleb{K}}) - F(G_{j-1})  \notag \\
    &\leq 
      \frac{1}{2m} \angleb{b^{G_{j-1}}_{G_{j-1} \oplus S_{\angleb{K}}},
      C^G_{G_{j-1} \oplus S_{\angleb{K}}} b^{G_{j-1}}_{G_{j-1} \oplus S_{\angleb{K}}}}
  \end{align}
  Note that the gradient $b_{G_{j-1}}^{G_{j-1}}$  
  		equals $0$, because $F(G_{j-1})$ is achieved by  
  		the linear model $w(G_{j-1})$. Then, using block matrix inverse
  formula, we have:
  \begin{align}
    F(S_{\angleb{K}}) - F(G_{j-1}) \leq 
      \frac{1}{2m} 
      \angleb{b^{G_{j-1}}_{S_{\angleb{K}}},
      C^G_{S_{\angleb{K}}} 
      b^{G_{j-1}}_{S_{\angleb{K}}}}
  \end{align}
  where $
    C^G_{S_{\angleb{K}}} = C_{S_{\angleb{K}}} - C_{{S_{\angleb{K}}}G} 
      C^{-1}_{S_{\angleb{K}}} C_{G{S_{\angleb{K}}}}.  $
  Using spectral techniques in Lemmas 2.5 and 2.6 in \citep{kemp} and
  noting that the minimum eigenvalue of $C$, $\lambda_{min}(C)$, is $\lambda^* + \lambda$, we have
  \begin{align}
      \frac{1}{2m} 
      \angleb{b^{G_{j-1}}_{S_{\angleb{K}}}, 
      C^G_{S_{\angleb{K}}} 
      b^{G_{j-1}}_{S_{\angleb{K}}}}
    \leq 
      \frac{1}{2m (\lambda^* + \lambda)} 
     \angleb{b^{G_{j-1}}_{S_{\angleb{K}}}, 
      b^{G_{j-1}}_{S_{\angleb{K}}}}.
  \end{align}
  Expanding $S_{\angleb{K}}$ into individual groups $s_i$, we continue:
  \begin{align}
    &= 
    	\frac{1}{2m(\lambda^* + \lambda)} \sum _{s_i \in S_{\angleb{K}}} 
       \angleb{b^{G_{j-1}}_{s_i}, {b^{G_{j-1}}_{s_i}}}  \\
    &\leq
        \frac{1}{2m(\lambda^* + \lambda)} \sum _{s_i \in S_{\angleb{K}}} 
        c(s_i) \max_{g} \frac{  \angleb{b^{G_{j-1}}_{g}, {b^{G_{j-1}}_{g}}}}{c(g)} \\
    &=
        \frac{1}{2m(\lambda^* + \lambda)} \sum _{s_i \in S_{\angleb{K}}} 
        c(s_i) \frac{\angleb{b^{G_{j-1}}_{g_j}, {b^{G_{j-1}}_{g_j}}}}{c(g_j)} \\
    &\leq
        \frac{ M(1+ \lambda)}{m (\lambda^* + \lambda)} \sum _{s_i \in S_{\angleb{K}}} 
        c(s_i)
          \frac{ F(G_{j}) - F(G_{j-1}) } { c(g_j) }.
  \end{align}
  The last equality follows from the greedy selection step of Algorithm~\ref{algo:gomp_lm} when feature groups are whitened. 
  The last inequality is given by Lemma ~\ref{lemma:convexity}. The 
  theorem then follows from $\gamma = (\frac{m}{M}) \frac{\lambda^* + \lambda}{1+\lambda} = \frac{\lambda^* + \lambda}{1+\lambda}$. 
\end{proof}

\section{BI-CRITERIA APPROXIMATION AT ALL BUDGETS}

Our analysis so far only bounds algorithm performance at 
budgets when new items are selected. However, an ideal analysis
should apply to all budgets. As illustrated in Figure~\ref{fig:all-budget-bad},
previous methods may choose expensive features early; 
until they are computed, we have no bounds. 
Figure~\ref{fig:all-budget-good} illustrates our proposed fix: each 
new item $g_{j+1}$ cannot be more costly than the current sequence $G_{j}$. 

This section proves two theorems of anytime prediction at \textit{any} budget.  Theorem~\ref{thm:greedy.biapproximation-upper-bound} shows that
 to approximate the optimal explained variance 
of cost $B$ within a constant factor,
an anytime algorithm must cost at least $4B$. 
We then motivate and formalize our fix in Algorithm~\ref{alg:greedy.doubling},
which is shown in
Theorem~\ref{thm:greedy.doubling-greedy-bound-approx} to achieve this
\textit{bi-criteria approximation} bound for both budget and objective with
the form: \mbox{$F(G_{\at{B}}) > (1 - e^{-\frac{\gamma^2}{1+\gamma}}) F(S_{\at{\frac{B}{4}}})$}, where $\gamma$ is the approximate submodular
ratio, i.e., the maximum constant $\gamma \leq 1$ such that for 
all sets $ A' \subseteq A$ and all element $x$,
\begin{equation}
\label{def:greedy.approx-submodularity}
    \gamma (F(A \cup \{x\}) - F(A)) \leq F(A' \cup \{x\}) - F(A').
\end{equation}

We first illustrate the inherent difficulty in 
generating single sequences that are competitive at arbitrary budgets
$B$ by using the following budgeted maximization problem:
\begin{align}
\label{eq:greedy.hard-problem}
X = \{1,2,\ldots\},\;\; c(x) = x, \;\;
F(S) = \sum_{x \in S} e^x.
\end{align}
The above problem originates from fitting the linear model
$Y = \sum _{i=1}^D e^iX_i$, where $X_i$'s are i.i.d. and $X_i$ 
costs $i$. 

\begin{theorem}
\label{thm:greedy.biapproximation-upper-bound}
Let $\mathcal{A}$ be any algorithm for selecting sequences $A = (a_1,
\ldots)$.  The best bi-criteria approximation that $\mathcal{A}$ can
satisfy must be at least a $4$-approximation in cost for the sequence
described in Equation~(\ref{eq:greedy.hard-problem}).  That is, there
does not exist a $C < 4$, and a $c_1 \in [0,1)$, such that for any budget $B$ and
any sequence $S$,
\[
F(A_{\at{B}}) > \left(1 - c_1\right) F(S_{\at{\frac{B}{C}}}).
\]
\end{theorem}

\begin{proof}
For any budget $B$, it is clear that the optimal selection contains
a single item, $B$, whose value is $e^B$. 
For any budget $B$, let $m(B)$ denote the item of the maximum cost that is selected by the algorithm.
If the bi-criteria bound holds, then 
$\sum _{k=1}^{m(B)} e^k \geq F(A_{\at{B}}) > \left(1 - c_1\right) F(S_{\at{\frac{B}{C}}})$. 
Taking the log of both sides and rearranging terms, we have $m(B) \geq \lfloor \frac{B}{C} \rfloor + \ln(1-c_1) + \ln(e-1) - 2$. 
Since $3 - \ln (1-c_1) - \ln (e-1) > 0$,  we have for $B$ large enough:
$C \geq \frac{B}{m(B)}. $ 
Hence, we need to minimize $\frac{B}{m(B)}$ for all $B$ to minimize $C$. We
can assume $a_j$ to be increasing 
because otherwise we could remove the violating $a_j$ 
from the sequence and decrease the ratio $\frac{B}{m(B)}$ for all subsequent $j$. 

Let $b_j := c(A_j)$ and $\alpha_{j} := \frac{c(a_{j})}{b_{j-1}}$. 
Then immediately before $a_{j}$ is available,
$\frac{B}{m(B)} \rightarrow 
\frac{c(A_j)}{c(a_{j-1})} \geq \frac{(1+\alpha_j)b_{j-1}}{b_{j-1}} = 1+\alpha_j$. If we can bound $\frac{B}{m(B)}\leq C$ for all $B$, 
then there exists $\alpha_{max}$ such that
$\alpha_j < \alpha_{max}$ for all $j$ large enough. 
Immediately after a new $a_j$ is selected, 
$\frac{B}{m(B)} = \frac{c(A_j)}{c(a_{j})} = \frac{1+\alpha_j}{\alpha_j}$. 
For $\frac{B}{m(B)}$ to be bounded, there must exist some $\alpha_{min}>0$ such that
$\alpha_j > \alpha_{min}$ for large enough $j$.
Now we consider the ratio $\frac{B}{m(B)}$ right before $a_{j+1}$ is selected:
\begin{align}
\hspace{-13pt} \frac{c(A_{j+1})}{c(a_{j})}= \frac{b_j(1+\alpha_{j+1})}{b_j\frac{\alpha_{j}}{1+\alpha_j}} =  
1 + \frac{\alpha_{j+1}}{\alpha_j} + \alpha_{j+1} + \frac{1}{\alpha_j}.
\label{line:lower_bound_cost_ratio}
\end{align}
Assume for seek of contradiction that $\frac{c(A_{j+1})}{c(a_{j})}$ 
is bounded above by $z$ for some $z \in (1,  4)$.
Let $y := \frac{\alpha_{j+1}}{\alpha_j}$. Then we have:
$ z \geq 1 + y + y\alpha_j + \frac{1}{\alpha_j} \geq 1 + y + 2\sqrt{y} = (\sqrt{y} + 1)^2$. Hence $y \leq (\sqrt{z} -1)^2 < 1$. 
So \mbox{$a_{j+1} \leq (\sqrt{z} -1)^2 a_j$}, which implies that $a_j$ converges to $0$ and we have a contradiction.
So \mbox{$C \geq \frac{B}{m(B)}  \rightarrow \frac{c(A_{j+1})}{c(a_{j})} \geq 4$} for large $j$.
\end{proof}

The above proof lower bounds the cost approximation ratio $C$ by Eq.~\ref{line:lower_bound_cost_ratio}, which is shown to be at least $4$ for $C < \infty$. We note that $Eq.~\ref{line:lower_bound_cost_ratio}$ equals $4$ if $\forall j, \alpha_j = 1$, which means the sequence total cost is doubled at each selection step.
This observation leads to \textit{Doubling Algorithm} (Alg.~\ref{alg:greedy.doubling}): we perform greedy selection in the same way as CS-G-FR, except that the total cost can be at most doubled at each step (illustrated in Figure~\ref{fig:doubling-algo}). 
The advantage of Doubling Algorithm over 
CS-G-FR is that 
the former prevents early computation of expensive features and induces a smoother increase of total cost; in most real-world data-sets, the two are identical after few steps because 
feature costs are often in a narrow range. 
We will analyze Doubling Algorithm with the following assumption, called \textit{doubling capable}.

\begin{figure}
\centering
\subfloat[Before $F$ is computed, we have no output or bounds.]{
  \includegraphics[width=0.49\textwidth]{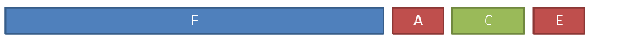}
  \label{fig:all-budget-bad}
}

\subfloat[Our constraint $c(g_{j+1}) \leq c(G_j)$ induces a smoother cost increase. ] {
  \includegraphics[width=0.49\textwidth]{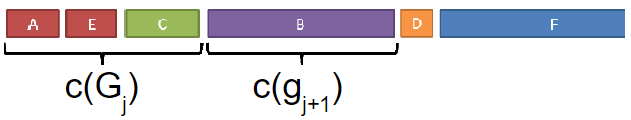}
  \label{fig:all-budget-good}
}

\subfloat[Illustration of Doubling Algorithm Cost Constraint]{
\includegraphics[width=0.45\textwidth]{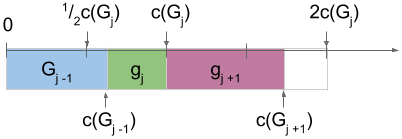}
  \label{fig:doubling-algo}
}

\caption{Doubling Algorithm (b) has better anytime behaviors 
than greedy algorithm with no cost constraints (a).}
\label{fig:doubling}
\end{figure}

\begin{algorithm}[t]
  \caption{Doubling Algorithm}
  \label{alg:greedy.doubling}
  
  \SetKwInOut{Input}{input}
    \Input{ objective function $F$, elements $X$, minimum cost $c_{\textrm{min}}$}
    
    Let $g_1 = \underset{x \in X,\ c(x) \le c_{\textrm{min}}}{\argmax} \frac{F(\{x\})}{c(x)}$; 
    Let $G_1 = g_1$ \;
    \For{$j = 2,\ldots$} {
    Let $g_j = \underset{x \in X \setminus G_{j-1},\ c(x) \le c(G_{j-1})}{\argmax} \;
      \frac{F(G_{j-1} \oplus \{x\}) - F(G_{j-1})}{c(x)}$ \;
    Let $G_j = G_{j-1} \oplus \{g_j\}$ \;
    }
\end{algorithm}

\begin{definition} 
\label{def:greedy.doubling-capable}
Let $G = (g_1, \ldots)$ be the sequence selected by the doubling
algorithm.  The set $X$ and function $F$ are \textit{doubling capable}
if, at every iteration $j$, the following set is non-empty:
$
\{x \mid x \in X \setminus G_{j-1},\ c(x) \le c(G_{j-1})\}
$
\end{definition}

\begin{theorem}
\label{thm:greedy.doubling-greedy-bound-approx}
Let $G = (g_1, \ldots)$ be the sequence selected by the doubling
algorithm (Algorithm~\ref{alg:greedy.doubling}).  Fix some $B >
c_{\textrm{min}}$.  Let $F$ be $\gamma$-approximately
submodular as in Definition~\ref{def:greedy.approx-submodularity}.
For any sequence $S$,
\[
F(G_{\at{B}}) > \left(1 - e^{-\frac{\gamma^2}{1+\gamma}} \right) F(S_{\at{\frac{B}{4}}}).
\]
\end{theorem}
\begin{proof}
Doubling capable easily leads to the observation that for all budgets $B$, there exists an index $j$ such that
$\frac{B}{2} \leq c(G_j) < B$.
Choose $K$ and $k$ to be  the largest integers such that
$\frac{B}{2} \leq c(G_K) < B$ and 
$\frac{B}{8} \leq c(G_k) < \frac{B}{4}$. Since at each step we at most double the 
total cost and $4c(G_k) < B$, we observe $K \geq k+2$. 
For each $j$, define $s_j = \frac{F(G_{j+1}) - F(G_{j})}{c(g_{j+1})}$ as the best 
rate of improvement among the items Doubling Algorithm is allowed to consider
after choosing $G_j$. 
Consider the item $x$ in sequence $S_{\at{\frac{B}{4}}}$ of the maximum cost. 

(Case 1) If $c(x) \leq c(G_k)$, then
every item in $S_{\at{\frac{B}{4}}}$ was a candidate for $g_{j}$ for all $j=k+1,..., K$. 
So by approximate submodularity from Equation~\ref{def:greedy.approx-submodularity}, we have 
\begin{align}
\label{eq:sub-additive}
F(S_{\at{\frac{B}{4}}}) \leq F(S_{\at{\frac{B}{4}}} \cup G_{j}) \leq F(G_j) + \frac{B s_j}{4 \gamma}.
\end{align}
Then using the standard submodular maximization proof technique, we define
\mbox{$\Delta _j = F(S_{\at{\frac{B}{4}}}) - F(G_j)$}. Applying $s_j = \frac{\Delta _{j} - \Delta_{j+1}}{c(g_{j+1})}$ in the above inequality, 
we have
\mbox{$\Delta _{k+j} \leq \Delta_k \prod _{j=k+1}^{k+j} ( 1 - \gamma \frac{ 4 c(g_{j})}{B})$}. Maximizing the 
inequality by setting $c(g_{j}) = \frac{B}{K-k} \leq \frac{c(G_K) - c(G_k)}{4 (K-k)}$, 
and using $(1- z/l)^l < e^{-z}$, we have 
\mbox{$F(G_K) > (1 - e^{-\gamma}) F(S_{\at{\frac{B}{4}}}).
$}

From now on, we assume that $c(x) > c(G_k)$ and consider 
two cases by comparing $c(g_{k+2})$ and $c(G_{k})$. 

(Case 2.1) If $c(g_{k+2}) \geq c(G_{k})$, then 
$c(G_K) - c(G_{k+1}) \geq c(g_{k+2}) \geq c(G_k)$. 
Since $c(G_{k+1}) \leq 2 c(G_k)$ and $c(x) > c(G_k)$, 
we have $c(G_K) - c(G_{k+1}) \geq \frac{B}{2} - 2c(G_k)$.
So \mbox{$c(G_K) - c(G_{k+1}) \geq \max ( c(G_k), \frac{B}{2} - 2c(G_k) ) \geq \frac{B}{6}$}. 
Thus, using the same proof techniques as in case 1, we can analyze the ratio between $\Delta_{k+1}$ and $\Delta_K$, and have:
\mbox{$
F(G_K) > (1 - e^{-\frac{2}{3} \gamma}) F(S_{\at{\frac{B}{4}}}).
$}

(Case 2.2) 
Finally, if \mbox{$c(g_{k+2}) < c(G_k) < c(x) < c(G_{k+1})$},
$g_{k+2}$ was a candidate for $g_{k+1}$, and $x$ was a candidate for 
$g_{k+2}$. 
For an item $y$, let 
\mbox{$r(y^j)= \frac{F(G_{j} \cup \{ y \}) - F(G_j)}{c(y)}$} 
be the improvement rate of item $y$ at $G_j$. 
Then we have \mbox{$r(g_{k+1}^k) > r(g_{k+2}^k)$} and \mbox{$r(g_{k+2}^{k+1}) > r(x^{k+1})$}. 
Since the objective function is increasing, we have 
\mbox{$r(x^k) c(x) \leq r(x^{k+1})c(x) + r(g_{k+1}^k)c(g_{k+1})$},
so that 
\mbox{$r(x^k) \leq r(x^{k+1}) + r(g_{k+1}^k) \frac{c(g_{k+1})}{c(x)}$}.
Then by the definition of $\gamma$ in Equation~\ref{def:greedy.approx-submodularity}, we have 
$ \gamma r(g^{k+1}_{k+2}) \leq r(g_{k+2}^k)$. Hence we have 
$ \gamma r(x^{k+1}) \leq  r(g_{k+1}^k)$, which leads to 
\mbox{$r (x^k) \leq r(g_{k+1}^k) (\frac{1}{\gamma} + \frac{c(g_{k+1})}{c(x)} ) \leq r(g_{k+1}^k) (1 + \frac{1}{\gamma})$}. Then
inequality~(\ref{eq:sub-additive}) holds with a coefficient adjustment and becomes
$
F(S_{\at{\frac{B}{4}}}) \leq F(G_k) + \frac{B s_k (1+\gamma)}{4 \gamma^2}.
$
Noting that the above inequality holds for all $j=k+1, ..., K$, we can replace the constant $\gamma$ in the 
proof of case $1$ with $\frac{\gamma^2}{1+\gamma}$ and have the following bound:\mbox{
$
F(G_K) > (1 - e^{-\frac{\gamma^2}{1+\gamma} }) F(S_{\at{\frac{B}{4}}}).
$}

\end{proof}


\section{EXPERIMENTS}
\label{sec:experiment}

\begin{table*}
\caption{Test time 0.97-Timeliness measurement of different methods on \Grain. We break
 the methods into OMP, FR and Oracle family: e.g., ``Single" in the G-CS-OMP family means G-CS-OMP-Single, and ``FR" in the Oracle family means the oracle curve derived from G-FR. }
\label{tab:grain_auc} 
\begin{center}
\resizebox{0.7\textwidth}{!}{
\begin{tabular}{cccc|c|cc|c}
\multicolumn{4}{c|}{CS-G-OMP-Variants} &
\multicolumn{1}{c|}{CS-G-FR} &
\multicolumn{2}{c|}{Oracles} &
\multicolumn{1}{c}{Sparse} \\
	CS-G-OMP & 
	Single & 
	No-Whiten & 
	G-OMP & 
	\; & 
	FR Oracle &
	OMP Oracle &
  \; \\
  \hline
	\textbf{0.4406} & 
	0.4086 &
	0.4340 &
	0.4073 &
	\textbf{0.4525} &
	\textbf{0.4551} & 
	0.4508 & 
  0.3997 \\
\end{tabular}
}
\end{center}
\end{table*}

\begin{table*}
\caption{Test time 0.99-Timeliness measurement of different methods on \YahooLTR.}
\label{tab:yahoo_auc} 
\begin{center}
\resizebox{0.7\textwidth}{!}{
\begin{tabular}{c|cccc|c|cc|c}
Group &
\multicolumn{4}{c|}{CS-G-OMP-Variants} &
\multicolumn{1}{c|}{CS-G-FR} &
\multicolumn{2}{c|}{Oracles} &
\multicolumn{1}{c}{Sparse} \\
	Size &
	CS-G-OMP & 
	Single & 
	No-Whiten & 
	G-OMP & 
	\; & 
	FR &
	OMP &
  \; \\
\hline
	5      & 
	\textbf{0.3188} & 
	0.3039 &
	0.3111 &
	0.2985 &
	\textbf{0.3222} &
	\textbf{0.3225} & 
	0.3211 & 
  0.2934 \\

	10      & 
	\textbf{0.3142} & 
	0.3117 &
	0.3079 &
	0.2909 &
	\textbf{0.3205} &
	\textbf{0.3207} & 
	0.3164 & 
  0.2858 \\

	15      & 
	\textbf{0.3165} & 
	0.3159 &
	0.3116 &
	0.2892 &
	\textbf{0.3213} &
	\textbf{0.3213} & 
	0.3177 & 
  0.2952 \\

	20     & 
	\textbf{0.3161} & 
	0.3124 &
	0.3065 &
	0.2875 &
	\textbf{0.3180} &
	\textbf{0.3180} & 
	0.3163 & 
  0.2895 \\

\end{tabular}
}
\end{center}
\end{table*}

\begin{figure}
\centering
\subfloat[Training Time OMP vs. FR (\Grain)]{
  \includegraphics[width=0.37\textwidth,height=3.5cm]{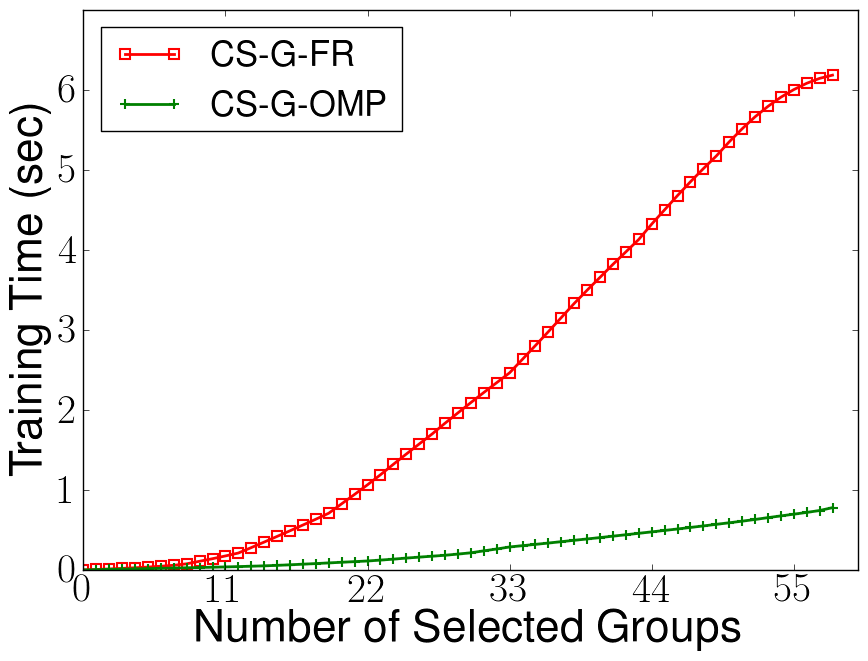}
}

\subfloat[Training Time OMP vs. FR (\YahooLTR)]{
  \includegraphics[width=0.37\textwidth,height=3.5cm]{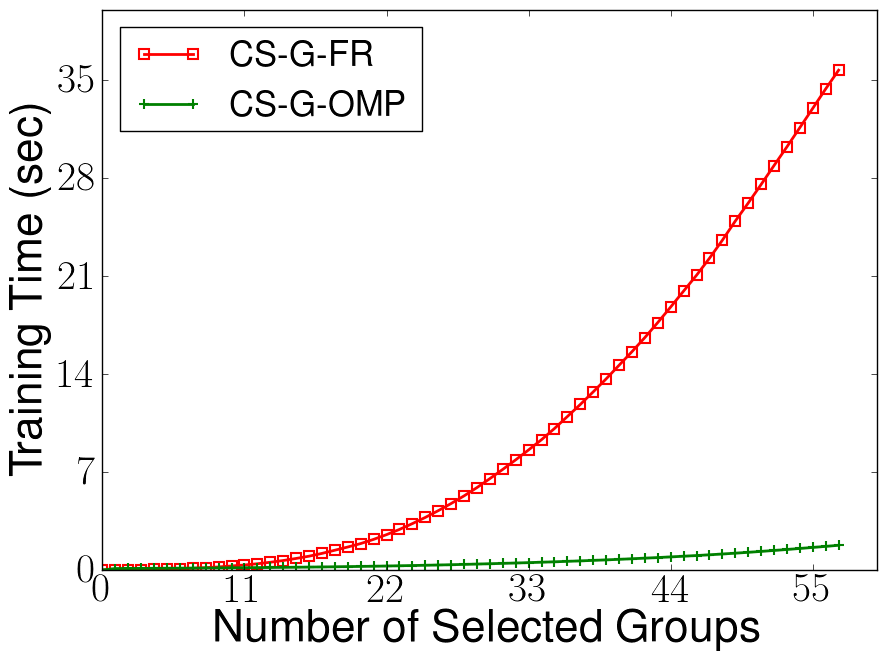}
}
\caption{The training time vs. the number of feature groups selected with two algorithms: CS-G-FR and 
CS-G-OMP. CS-G-OMP achieves a 8x and 20x overall training time speed-up 
on \Grain\, and \YahooLTR.} 
\label{fig:run_time}
\end{figure}

\subsection{DATA-SETS AND SET-UP}
We experiment our methods for anytime linear prediction on two real-world data-sets,
each of which has a significant number of feature groups with associated costs. 

\begin{itemize}[leftmargin=*]
\item \textbf{Yahoo! Learning to Rank Challenge} \citep{yahoo_ltr}
contains 883k web documents, each of which has a relevance score in $\{0, 1, 2, 3, 4\}$. Each of the 501 document features has an associated computational cost in 
$\{ 1, 5, 20, 50, 100, 150, 200\}$; the total feature cost is around 17K. The original data-set has no feature group structures, so we generated random group structures by grouping features of the same cost into groups of a given size $s$.\footnote{We experiment on group sizes $s \in \{ 5, 10, 15, 20 \}$. We choose regularizer 
$\lambda = 10^{-5}$ based on validation. We use 
$s=10$ for qualitative results such as plots and illustrations, but we report quantitative results for all group size $s$. For our quantitative results, we report the average test performance. The initial risk is $R(\emptyset)=0.85$.}

\item \textbf{Agriculture} is a proprietary data-set that contains 510k data samples, 328 features, and 57 feature groups. Each sample has a binary label in $\{1, 2\}$. Each feature group has an associated cost measured in its 
average computation time.\footnote{
There are 6 groups of size 32; the other groups have sizes between 1 and 6. 
The cost of each group is its expected computation time in seconds, ranging between 0.0005 and 0.0088; the total feature cost is 0.111. 
We choose regularizer $\lambda = 10^{-7}$. The data-set is 
split into five 100k sets, and the remaining 10k are used for validation. We report the cross validation results on the five 100K sets as the test results. The initial risk is $R(\emptyset) = 0.091$.}
\end{itemize}

\subsection{EVALUATION METRIC, BASELINE AND ORACLE}
\label{sec:timeliness}
\begin{figure}[t]
\centering
\subfloat[Plateau Effect and $\alpha$-Stopping Costs]{
 \includegraphics[width=0.40\textwidth,height=3.8cm]{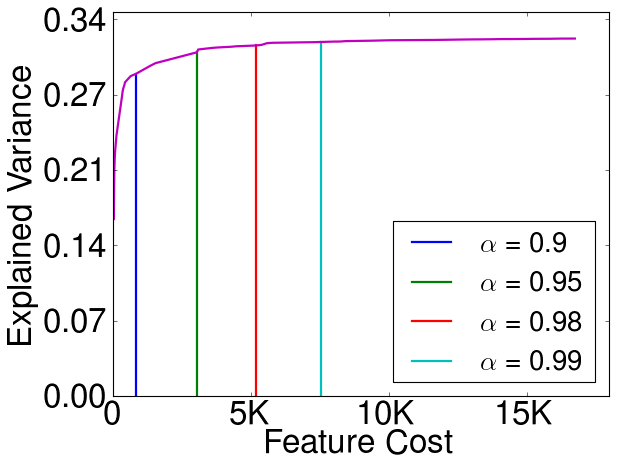}
 \label{fig:timeliness_a}
}

\subfloat[Importance of Costs (CS-G-OMP vs. G-OMP)]{
 \includegraphics[width=0.40\textwidth,height=3.8cm]{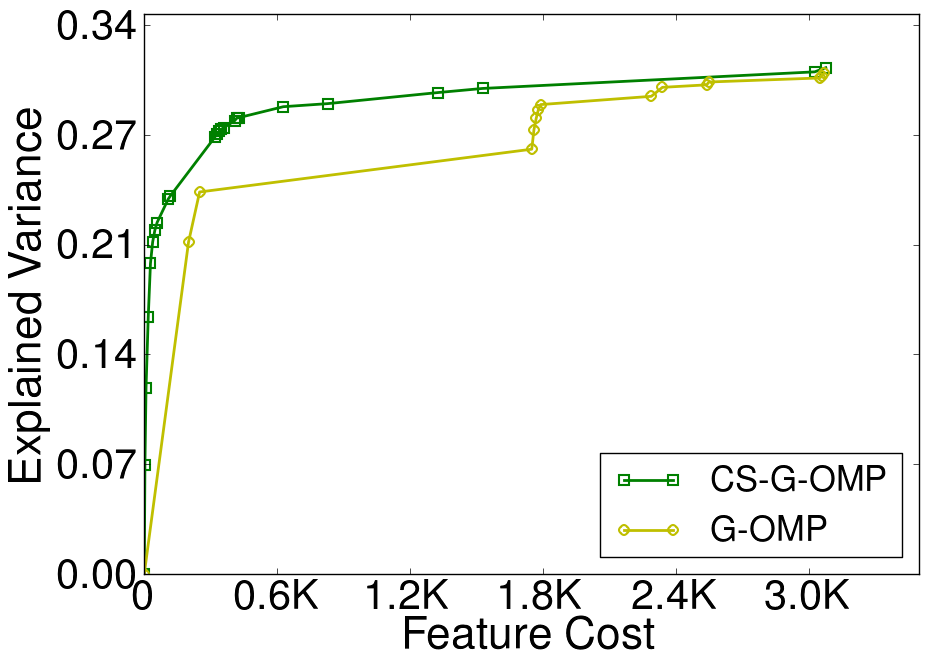}
 \label{fig:cost_vs_no_cost}
}
\label{fig:timeliness}
\caption{(a) Explained Variance vs. Cost curve of CS-G-OMP in
\YahooLTR. Vertical lines mark different $\alpha$-stopping costs.  (b) Explained Variance vs. Cost curve of CS-G-OMP and G-OMP on \YahooLTR\, set 1 with individual group size $s=10$, stopped at 0.97-stop cost.}
\end{figure}

Following the practice of \cite{timeliness}, we use the area under the 
maximization objective $F$ (explained variance) vs. cost curve normalized by the total area as the 
\textit{timeliness} measurement of the anytime performance of an algorithm\footnotetext{\cite{timeliness} define \textit{timeliness} as the area under the average precision vs. time curve}. In our data-sets, the performance of linear predictors plateaus much before
all features are used, e.g., Figure~\ref{fig:timeliness_a} demonstrates this effect in \YahooLTR, where the last one percent of total improvement is bought by half of the total feature cost. Hence the majority of the timeliness measurement is from the plateau performance of linear predictors. The difference between timeliness of different anytime algorithms diminishes due to the plateau effect. Furthermore, the difference vanishes as we include additional redundant high cost features. To account for this effect, we 
stop the curve when it reaches the plateau.
We define an \textit{$\alpha$-stopping cost} for parameter $\alpha$ in $[0,1]$ as the cost at which our CS-G-OMP achieves $\alpha$ of the final objective value in training and ignore the objective vs. cost curve after
the $\alpha$-stopping cost. We call the timeliness measure on the shortened curve 
as \textit{$\alpha$-timeliness}; 1-timeliness equals the normalized area under the full curve and 0-timeliness is zero. If a curve does not pick a group at $\alpha$-stopping cost, we linearly interpolate the objective value at the stopping cost to 
computr timeliness. 
We say an objective vs. cost curve has reached its final plateau if at least 95\% of the total 
objective has been achieved and the next 1\% requires more than 20\%
feature costs. (If the plateau does not exist, we use $\alpha = 1$.) Following this rule, we choose $\alpha = 0.97$ for \Grain\ and $\alpha = 0.99$ for \YahooLTR.

Since an exhaustive search for the best feature sequencing is intractable, 
we approximate with the \textbf{Oracle} anytime performance following the approach of \cite{timeliness}. Given an objective vs. cost curve of a sequencing, we reorder the feature groups in descending order of their marginal benefit per unit cost, assuming that the marginal benefits stay the same after reordering. We specify which sequencing is used for creating \textbf{Oracle} in Section~\ref{sec:selection_methods}. 
For baseline performance, we use cost-weighted Group Lasso \citep{group_lasso}, which
scales the regularization constant of each group with the cost of the group. We note that the cascade design by 
\cite{chen:12} can be reduced to this baseline if we enforce
linear prediction. 
More specifically, the baseline solves the following minimization problem:
\mbox{$
  \min _{w \in \mathbb{R}^{D}} \Vert Y -
    Xw \Vert^2_2 + \lambda
    \sum _{j=1}^J c(\mathcal{G}_j) \Vert w _{\mathcal{G}_j} \Vert _2,
$}
and we vary value of regularization constant
$\lambda$ to obtain lasso paths. We call this baseline algorithm \textbf{Sparse}\footnote{We use an off-the-shelf software, 
SPAMS (SPArse Modeling Software \citep{spams}), to solve the optimization.}.

\subsection{FEATURE COST}
Our proposed CS-G-OMP differs from Group Orthogonal Matching Pursuit (G-OMP) \citep{gomp} in that G-OMP does not consider feature costs when evaluating features. We show that this difference is crucial for anytime linear prediction. In Figure~\ref{fig:cost_vs_no_cost}, we compare the objective vs. costs curves of CS-G-OMP and G-OMP that are stopped at 0.97-stopping cost on \YahooLTR. As expected, CS-G-OMP achieves a
better overall prediction at every budget, qualitatively demonstrating the importance of incorporating feature costs. Table~\ref{tab:grain_auc} and Table~\ref{tab:yahoo_auc} 
quantify this effect, showing that CS-G-OMP 
achieves a better timeliness
measure than regular G-OMP. 

\subsection{GROUP WHITENING}
We provide experimental evidence that  
Group whitening, i.e., $X_g^TX_g = I_{D_g}$ for each group $g$, is a key assumption of both this work and previous feature group selection literature  by \cite{gomp, log_gomp}.
In Figure~\ref{fig:whiten_vs_no_whiten}, we compare 
anytime prediction performances using group whitened data 
against those using the common  
normalization scheme where each feature dimension
is individually normalized to have zero mean and unit variance. 
The objective vs. cost curve qualitatively shows that group whitening consistently results in the better predictions.
This behavior is expected from data-sets whose feature groups contain correlated features, e.g., group whitening effectively prevents selection step $(*)$ from overestimating the predictive power of feature groups of repeated good features. Table~\ref{tab:grain_auc} and Table~\ref{tab:yahoo_auc} demonstrate quantitatively the consistent better timeliness performance of CS-G-OMP over that of CS-G-OMP-no-whiten.

\begin{figure}
\centering
\subfloat[Group Whiten vs. No-Whiten (\Grain)]{
  \includegraphics[width=0.4\textwidth,height=3.8cm]{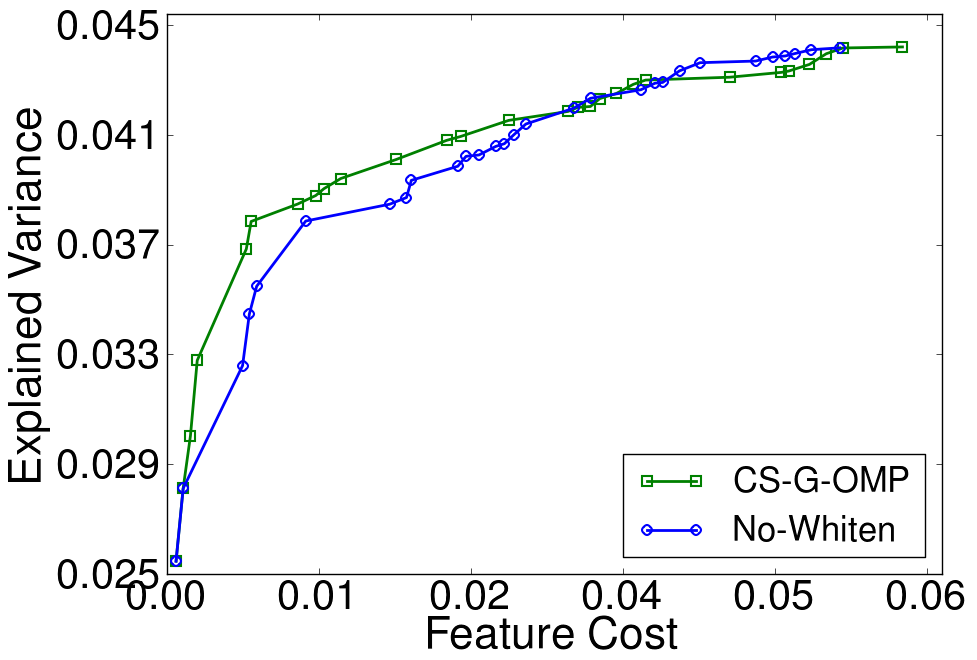}
}

\subfloat[Group Whiten vs. No-Whiten (\YahooLTR)]{
  \includegraphics[width=0.4\textwidth,height=3.8cm]{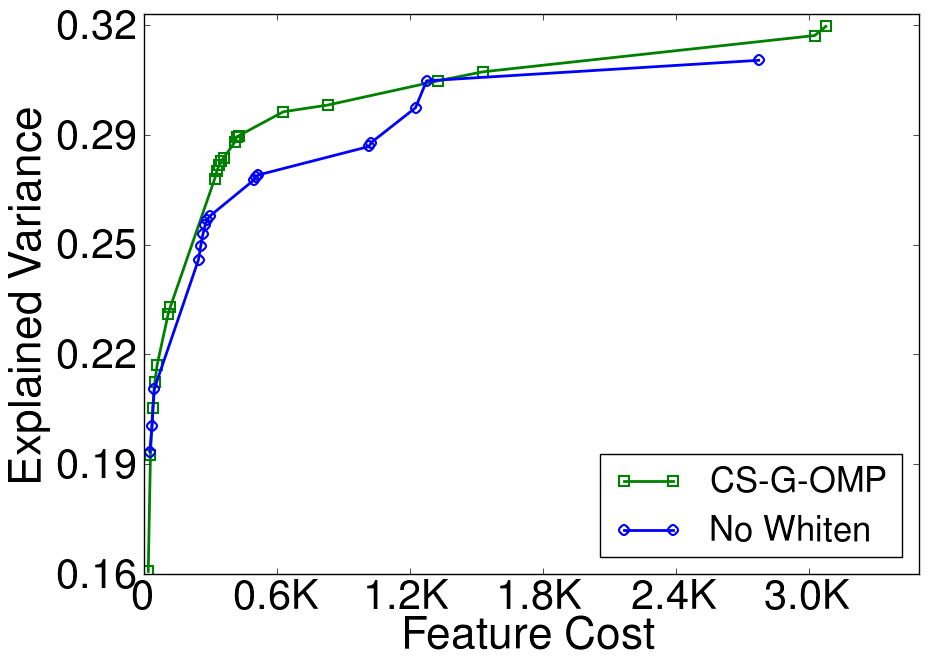}
}
\caption{Explained Variance vs. Feature Cost curves on \Grain\, (a) and \YahooLTR\, (b)  comparing group whitening with no group whitening. The curves stop at 0.97-stopping cost.}
\label{fig:whiten_vs_no_whiten}
\end{figure}

\subsection{SELECTION CRITERION VARIANTS}
\label{sec:selection_methods}

This section compares CS-G-OMP and CS-G-FR, along with 
variants of these two methods and the baseline, Sparse. 
We formulated the variant of CS-G-OMP, \textit{single}, in Section~\ref{sec:method} and it intuitively chooses feature groups of the best single feature dimension per group cost. Our experiments show that this modification degrades prediction performance of CS-G-OMP. 
Since FR directly optimizes the objective at each step, we expect CS-G-FR to perform the best and use its curve to compute the \textbf{Oracle} curve as an approximate to the best achievable performance.

In Figure~\ref{fig:selection_methods}, we evaluate CS-G-FR, CS-G-OMP and CS-G-OMP-single based on the objective in Theorem~\ref{thm:main}, i.e., explained variance vs. feature cost curves. 
CS-G-FR, as expected, outperforms all other methods. CS-G-OMP outperforms the baseline method, Sparse, and the CS-G-OMP-Single variant. 
The performance advantage of CS-G-OMP over CS-G-OMP-Single is much clearer in the \Grain\ data-set than in the \YahooLTR\ data-set. \Grain\ has a natural group structure which may contain correlated features in each group. \YahooLTR\ has a randomly generated group structure whose features were filtered by feature selection before the data-set was published \citep{yahoo_ltr}. CS-G-FR and CS-G-OMP outperform the baseline algorithm, Sparse. We speculate that linearly scaling group regularization constants by group costs did not enforce Group-Lasso to choose the most cost-efficient features early. 
The test-time timeliness measures of each of the methods are recorded in Table~\ref{tab:grain_auc} and Table~\ref{tab:yahoo_auc},
and quantitatively confirm the analysis above. Since \Grain\, and \YahooLTR\, are originally a classification and a ranking data-set, respectively, we also report in Figure~\ref{fig:selection_methods} the performance using classification accuracy and NDCG@5. This demonstrates the same qualitatively results as using explained variants. 


\begin{figure}
\centering
\subfloat[FR vs. OMP vs. Sparse (\Grain)]{
  \includegraphics[width=0.40\textwidth,height=4cm]{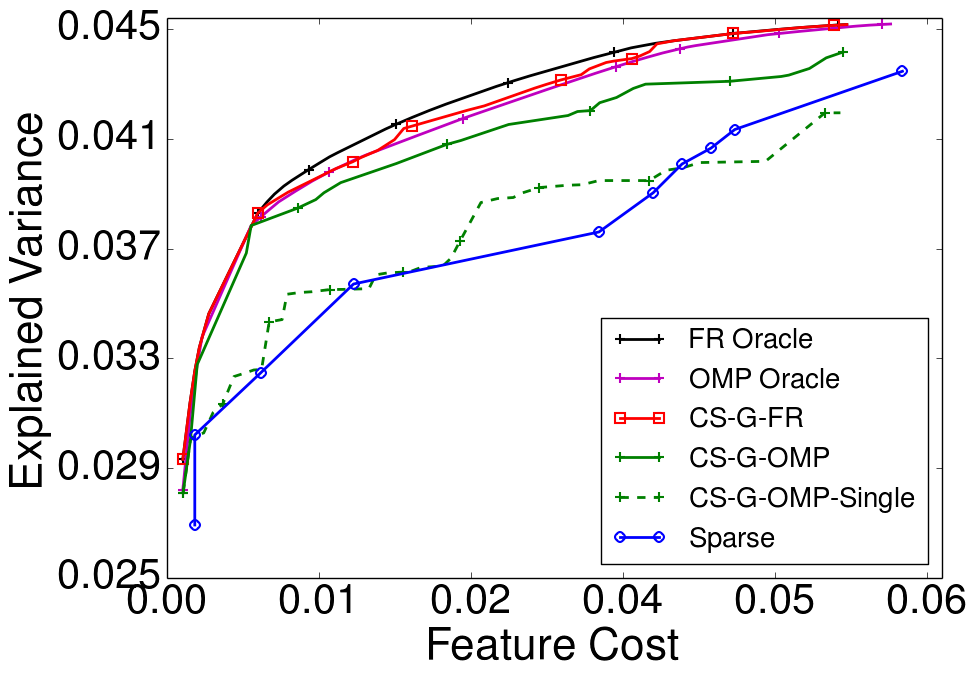}
}

\subfloat[FR vs. OMP vs. Sparse (\YahooLTR)]{
  \includegraphics[width=0.40\textwidth,height=4cm]{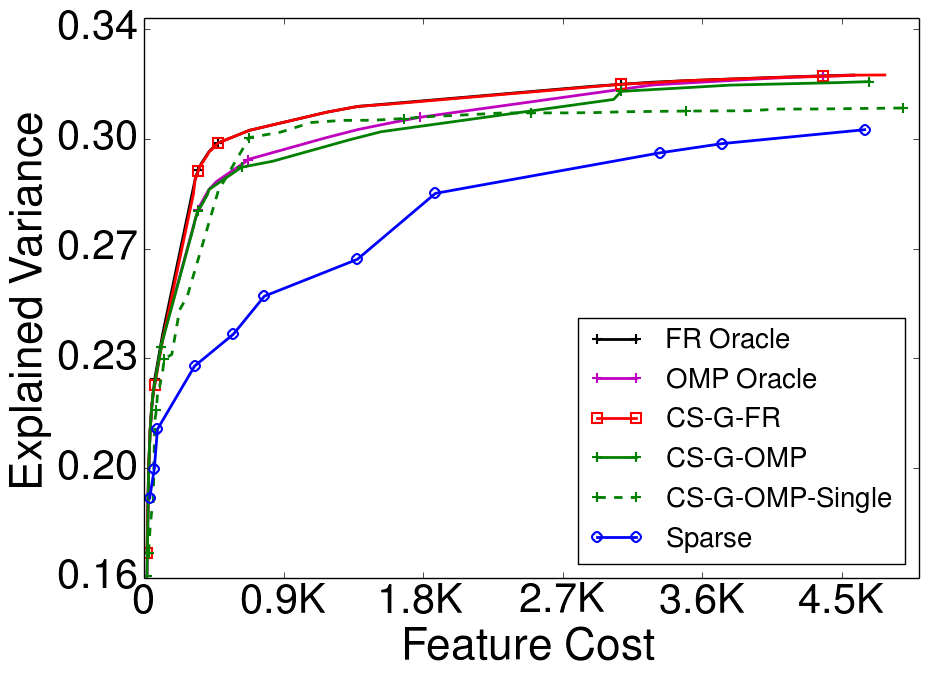}
}

\subfloat[FR vs. OMP vs. Sparse (\Grain)]{
  \includegraphics[width=0.40\textwidth,height=4cm]{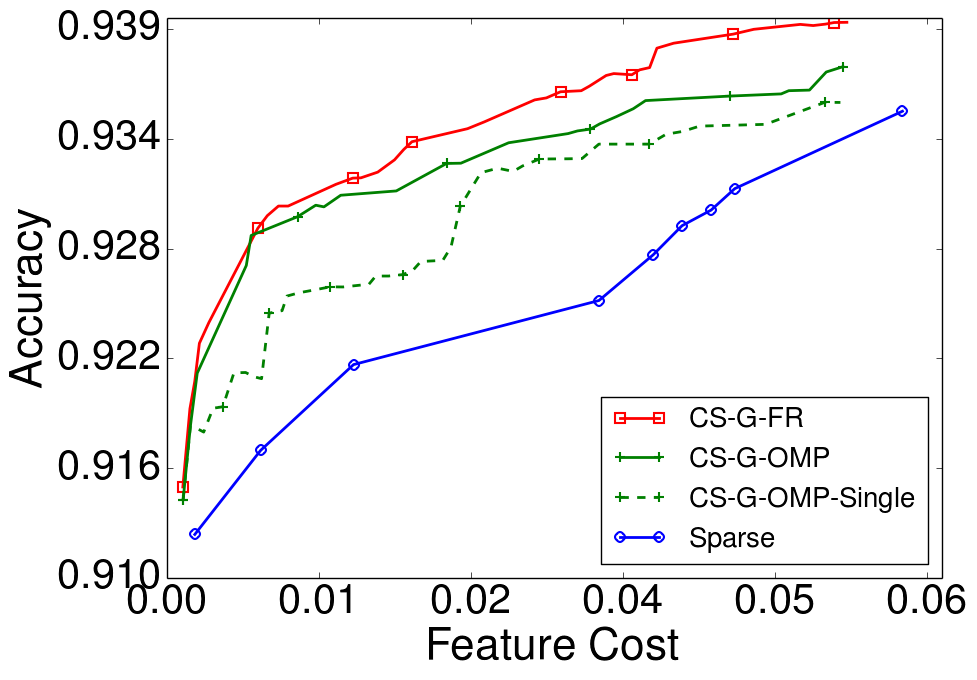}
}

\subfloat[FR vs. OMP vs. Sparse (\YahooLTR)]{
  \includegraphics[width=0.40\textwidth,height=4cm]{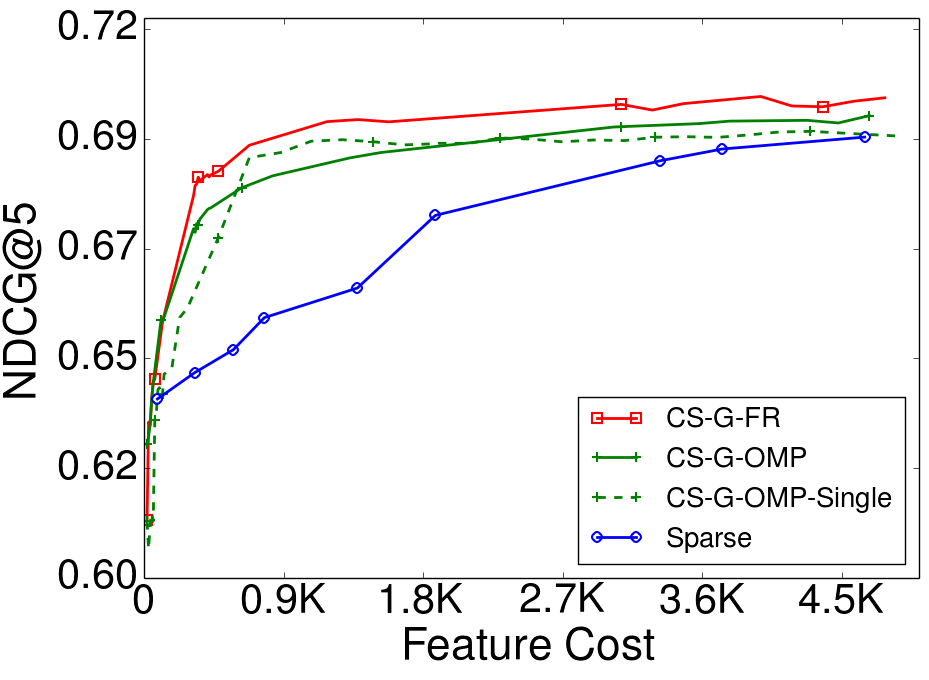}
}

\caption{(a),(b): Explained Variance vs. Feature Cost curves on 
\Grain\, and \YahooLTR (group-size=10), 
using CS-G-OMP, CS-G-FR and their Single variants. Curves stop at 0.97 and 0.98 stopping costs. (c),(d): Same curve with the natural objectives of the data-sets: accuracy and NDCG@5.} 
\label{fig:selection_methods}
\end{figure}

As expected, when compared against CS-G-OMP, CS-G-FR consistently chooses more cost-efficient features at the cost of a longer training time.
In the context of linear regression, let us assume that the group sizes are 
bounded by a constant when we are to select the number 
$K$ feature group. We can then compute a new model of $K$  groups in $O(K^2N)$ using
Woodbury's matrix inversion lemma, evaluate it in $O(KN)$, and compute the gradients with respect to the weights of unselected groups in $O(N(J-K))$. Thus, CS-G-OMP requires $O(K^2N + JN)$ at step $K=1,2,3,..., J$ and CS-G-FR requires $O((J-K)K^2N)$, so the total training  complexities for CS-G-OMP and CS-G-FR are $O(J^3N)$ and $O(J^4N)$, using $\sum_{K=1}^J K^2 = \frac{1}{6}J(J+1)(2J+1)$ and $\sum _{K=1}^J K^3 = \frac{1}{4}J^2(J+1)^2$. 
We also show this training complexity gap empirically in Figure~\ref{fig:run_time}, which plots the curves of training time vs. number of feature groups selected. When all feature groups are selected, CS-G-OMP achieves a 8x speed-up in \Grain\ over CS-G-FR. In \YahooLTR, CS-G-OMP achieves a speed-up factor between 10 and 20; the smaller the sizes of the groups, the larger speed-up due to the increase in the number of groups. Both greedy methods are much faster than the Lasso path computation using SPAMS, however.

\let\thefootnote\relax\footnote{
This work was conducted in part through collaborative participation in the Robotics Consortium sponsored by the U.S Army Research Laboratory under the Collaborative Technology Alliance Program, Cooperative Agreement W911NF-10-2-0016.}

\bibliographystyle{plainnat}
\bibliography{final}
\clearpage
\appendix

\section{Additional Proof Details}
\label{sec:proof_II}
This section describes a functional boosting view of selecting features for generalized linear models of one-dimensional response. We then prove Lemma~\ref{lemma:smoothness} and Lemma~\ref{lemma:convexity} for this more general setting. These more general
results in turn extend Theorem~\ref{thm:main} to generalized 
linear models.

\subsection{Functional Boosting View of Feature Selection}
\label{sec:functional}

We view each feature $f$ as a function 
$h_f$ that maps sample $x$ to $x_f$. We define $f_S: \R^{D} \rightarrow \R$ 
to be the best linear predictor using features in $S$, i.e., $f_S(x) \triangleq w(S)^Tx_S$. For each feature dimension $d \in D$, the coefficient of 
$d$ is in $w(S)$ is $w(S)_d = f_S(e_d)$, where $e_d$ is the $d^{th}$ dimensional unit vector. So $\Vert w(S) \Vert_2^2 = \sum _{d = 1}^D \Vert f_S(e_d) \Vert _2^2$. 
Given a generalized linear model with link function $\nabla \Phi$, 
the predictor is $E[ y | x ] = \nabla \Phi(w^Tx)$ for some $w$ and the calibrated loss is $r(w) = \sum _{i=1}^n (\Phi(w^Tx_i) - y_iw^Tx_i)$. 
Replacing $f_S(x_i) = w(S)^Tx_i$, we have 
\begin{align}
	r(w(S)) = \sum _{i=1}^n (\Phi(f_S(x_i)) - y_if_S(x_i)).
	\label{eq:glm_loss_general}
\end{align}
Note that the risk function in Equation~\ref{eq:risk} can be rewritten as 
the following to resemble Equation~\ref{eq:glm_loss_general}:
\begin{align}
 R(S) = 
 	\mathcal{R}[f_S] =& \frac{1}{n} \sum _{i=1}^n (\Phi(f_S(x_i)) - y_i^Tf_S(x_i)) \notag \\
    &{} + \frac{\lambda}{2} \sum _{d = 1}^D \Vert f_S(e_d) \Vert _2^2 + A,
  \label{eq:min_func}
\end{align}
where $\phi(x) = \frac{1}{2}x^2$ for linear predictions and constant 
$A = \frac{1}{2n} \sum _{i=1}^n y_i^2$. 
Next we define the inner product between two functions $f, h : \R^D \rightarrow \R$ 
over the training set to be:
\begin{align}
\angleb{f, h} \triangleq \frac{1}{n} 
  \sum _{i=1}^n f(x_i)h(x_i) + \frac{\lambda}{2} \sum _{d=1}^D f(e_d)h(e_d).
\end{align}
With this definition of inner product, we can compute the derivative of 
$\mathcal{R}$:
\begin{align}
  \nabla \mathcal{R}[f]  = \sum_{i=1}^n (\nabla\Phi(f(x_i)) 
  	- y_{i})\delta_{x_i} 
    + \sum _{d=1}^D f(e_d)\delta_{e_d},
\end{align}
where $\nabla \phi(x) = x$ for linear predictions, and 
$\delta_x$ is an indicator function for $x$. 
Then the gradient of objective $F(S)$ w.r.t coefficient $w_f$ of a feature dimension  $d$ can be written as:
\begin{align}
 b_{d}^S &= - \frac{1}{n}\sum_{i=1}^n (\nabla\Phi_p(w(S)^Tx^i) - y^{i})x^i_d - \lambda w(S)_d \\
    &= - \angleb{  \nabla \mathcal{R}[f_S], h_d }.
\end{align}
In addition, the regularized covariance matrix of features $C$ satisfies,
\begin{equation}
    C_{ij} = \frac{1}{n} X_i^TX_j + \lambda I(i=j) = \angleb{ h_i, h_j},
\end{equation}
for all $i, j = 1,2,..., D$.
So in this functional boosting view,  Algorithm~\ref{algo:gomp_lm} greedily chooses group $g$ that maximizes, with a slight abuse of notation of $\angleb{ \;, \; }$,
$\Vert \angleb{ h_g , \nabla \mathcal{R}[f_S] } \Vert _2^2 / c(g)$, i.e., 
the ratio between similarity of a feature group and the functional gradient, 
measured in sum of square of inner products, and the cost of the group

\subsection{Proof of Lemma~\ref{lemma:smoothness} and Lemma~\ref{lemma:convexity}}

The more general version of Lemma~\ref{lemma:smoothness} and Lemma~\ref{lemma:convexity} assumes that the objective functional $\mathcal{R}$ 
is $m$-strongly smooth and $M$-strongly convex using our proposed inner product rule.  
$M$-strong convexity is a reasonable assumption, 
because the regularization term $\Vert w \Vert _2^2 = \sum _{d = 1}^D \Vert f_S(e_d) \Vert _2^2$ ensures that all loss functional $\mathcal{R}$ with a convex $\Phi$ 
 strongly convex. 
In the linear prediction case, both $m$ and $M$ equals $1$. 

The following two lemmas are the more general versions of Lemma~\ref{lemma:smoothness} and Lemma~\ref{lemma:convexity}.
\begin{lemma}
  Let $\mathcal{R}$ be an m-strongly smooth functional with respect to our definition of inner products. Let $S$ and $G$ be some fixed sequences. Then
  \begin{align}
   F(S) - F(G) \notag \leq \frac{1}{2m} \angleb{b^G_{G \oplus S}, C_{G \oplus S}^{-1} b^G_{G\oplus S}}
  \end{align}
  \label{lemma:smoothness_glm}
\end{lemma}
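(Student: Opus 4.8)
The plan is to reduce the statement to a single one-shot bound on the risk decrease obtained by optimally moving away from $f_G$ inside the subspace $V$ spanned by the feature functions $\{h_d : d \in G\oplus S\}$, and then to evaluate that optimal decrease in closed form using the inner-product identities of Section~\ref{sec:functional}. First I would rewrite the left-hand side as a risk gap, $F(S) - F(G) = R(G) - R(S)$, and trade the arbitrary competitor $S$ for $G\oplus S$ by monotonicity: since $\mathrm{span}(S)\subseteq V$, minimizing $\mathcal{R}$ over the larger index set can only lower the risk, so $R(G\oplus S)\le R(S)$ and hence $F(S) - F(G)\le R(G) - R(G\oplus S) = \mathcal{R}[f_G] - \mathcal{R}[f_{G\oplus S}]$. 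This replaces $S$ by a quantity depending only on the gradient of $\mathcal{R}$ at $f_G$ and on the restriction of the problem to $V$.

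The core step invokes the defining quadratic (lower-envelope) inequality of $m$-strong smoothness anchored at $f_G$: for every $g\in V$,
\begin{align}
  \mathcal{R}[g] \ge \mathcal{R}[f_G] + \angleb{\nabla\mathcal{R}[f_G],\, g - f_G} + \tfrac{m}{2}\Vert g - f_G\Vert^2. \notag
\end{align}
Because $f_G\in V$, as $g$ ranges over $V$ the displacement $g - f_G$ ranges over all of $V$, so I would write $g - f_G = \sum_{d\in G\oplus S}\alpha_d h_d$. Using $\angleb{\nabla\mathcal{R}[f_G], h_d} = -b^G_d$ and $\angleb{h_d, h_{d'}} = C_{dd'}$ from Section~\ref{sec:functional}, the right-hand side becomes the finite-dimensional quadratic $\mathcal{R}[f_G] - \angleb{\alpha, b^G_{G\oplus S}} + \tfrac{m}{2}\angleb{\alpha, C_{G\oplus S}\alpha}$. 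Minimizing both sides over $g\in V$: the left-hand minimum is exactly $\mathcal{R}[f_{G\oplus S}] = R(G\oplus S)$, while completing the square on the right (minimizer $\alpha^\ast = \tfrac{1}{m}C_{G\oplus S}^{-1}b^G_{G\oplus S}$, which exists since $\lambda_{min}(C) > 0$) gives $\mathcal{R}[f_G] - \tfrac{1}{2m}\angleb{b^G_{G\oplus S}, C_{G\oplus S}^{-1}b^G_{G\oplus S}}$. Rearranging yields $R(G) - R(G\oplus S)\le\tfrac{1}{2m}\angleb{b^G_{G\oplus S}, C_{G\oplus S}^{-1}b^G_{G\oplus S}}$, which combined with the monotonicity step is the claim.

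The step I expect to be delicate is orientation rather than computation: the bound controls a risk gap from above, so it must rest on the quadratic \emph{lower} envelope of $\mathcal{R}$ (the strong-smoothness inequality as used here), not the upper one; invoking the wrong envelope reverses the inequality and the argument collapses. The second point to verify carefully is that minimizing the surrogate over all of $V$ genuinely reproduces $R(G\oplus S)$ — this relies on $f_G$ already lying in $V$, so that the surrogate minimizer is an admissible predictor supported on $G\oplus S$ and therefore attains the constrained minimum. The remaining ingredients — identifying the functional gradient components with $b^G_{G\oplus S}$, the Gram matrix with $C_{G\oplus S}$, and completing the square — are routine given the inner-product machinery; since the whole argument is phrased through these products, it transfers verbatim to the multidimensional setting with Frobenius products.
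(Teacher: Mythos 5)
Your proposal is correct and follows essentially the same route as the paper's proof: monotonicity to pass from $S$ to $G\oplus S$, reparametrization of the span around $f_G$ (valid because $f_G$ lies in it), the quadratic lower envelope of $\mathcal{R}$ at $f_G$, and minimization of the resulting quadratic in the $C_{G\oplus S}$ geometry to produce $\frac{1}{2m}\angleb{b^G_{G\oplus S}, C_{G\oplus S}^{-1} b^G_{G\oplus S}}$. Your remark on ``orientation'' is apt --- the paper's label ``$m$-strongly smooth'' in fact refers to the quadratic lower bound (what is conventionally called strong convexity), and you correctly identified and used that envelope.
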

\begin{proof}
First we optimize over the weights in $S$. 
  \begin{align*}
    &{} F(S) - F(G) \\
    &= \mathcal{R}[f_G] - \mathcal{R}[f_S] 
     = \mathcal{R}[f_G] - \mathcal{R}[\sum _{s \in S} \alpha_s^T h_s] \\
    &\leq \mathcal{R}[f_G] - \min _{w : w_i^T \in \R^{d_{s_i}}, s_i \in S} 
        \mathcal{R}[ \sum _{s_i \in S} w_{s_i}^T h_{s_i}] \\
\intertext{Adding dimensions in $G$ will not increase the risk, we have: }
    &\leq \mathcal{R}[f_G] - \min _{w : w_i \in \R^{d_{s_i}}, s_i \in G \oplus S}
        \mathcal{R}[ \sum _{s_i \in G \oplus S} w_{s_i} h_{s_i}] \\
\intertext{Since $f_G = \sum _{g_i \in G} \alpha_i h_{g_i}$, we have:}
    &\leq \mathcal{R}[f_G] - \min _{w} 
      \mathcal{R}[f_G + \sum _{s_i \in G \oplus S} w_i^T h_{s_i}] \\
\intertext{Expanding using strong smoothness around $f_G$, we have:}
    &\leq \mathcal{R}[f_G] - \min _{w} (
      \mathcal{R}[f_G] + \angleb{ \nabla \mathcal{R}[f_G], 
        \sum _{s_i \in G\oplus S} w_i^T h_{s_i} } \notag \\
    &\quad + \frac{m}{2} \Vert \sum _{s_i \in G \oplus S} w_i^T h_{s_i} \Vert _2^2) \\
    &= \max_{w} - 
      \angleb{ \nabla \mathcal{R}[f_G], 
      \sum _{s_i \in G\oplus S} w_i^T h_{s_i} } 
        - \frac{m}{2} \Vert \sum _{s_i \in G \oplus S} w_i^T h_{s_i} \Vert _2^2 \\
    &= \max_w \angleb{ b^{G}_{G\oplus S}, w} - \frac{m}{2} \angleb{w, C_{G\oplus S}w}
 \end{align*}
Solving $w$ directly we have:
\begin{align*}
  F(S) - F(G) \leq \frac{1}{2m} \angleb{ b^{G}_{G\oplus S} , C_{G\oplus S}^{-1} b^{G}_{G\oplus S}}
\end{align*}
\end{proof}

\begin{lemma}
  Let $\mathcal{R}$ be a M-strongly convex functional with respect to our definition of 
  inner products. Then
    \begin{align}
      F(G_j) - F(G_{j-1}) \geq \frac{1}{2M (1 + \lambda) } \angleb{{b^{G_{j-1}}_{g_j}}, b^{G_{j-1}}_{g_j} }
    \end{align}
  \label{lemma:convexity_glm}
\end{lemma}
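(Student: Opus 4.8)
The plan is to reduce the claim to a one-group minimization and then read off the constant from the curvature of $\mathcal{R}$ together with the group-whitening normalization. Working in the functional notation of Section~\ref{sec:functional}, note first that $F(G_j)-F(G_{j-1}) = R(G_{j-1}) - R(G_j) = \mathcal{R}[f_{G_{j-1}}] - \mathcal{R}[f_{G_j}]$, i.e., the decrease in risk gained by enlarging the active set from $G_{j-1}$ to $G_j = G_{j-1}\oplus g_j$. Since $f_{G_j}$ minimizes $\mathcal{R}$ over the whole span of the features in $G_j$, and every function of the form $f_{G_{j-1}} + \sum_{d\in g_j}\beta_d h_d$ lies in that span, re-optimizing all of $G_j$ can only beat re-optimizing the coefficients of $g_j$ alone: $\mathcal{R}[f_{G_j}] \le \min_{\beta}\mathcal{R}[f_{G_{j-1}} + \sum_{d\in g_j}\beta_d h_d]$. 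It therefore suffices to lower bound the improvement of this restricted $D_{g_j}$-dimensional problem.

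Next I would analyze $\psi(\beta) \triangleq \mathcal{R}[f_{G_{j-1}} + \sum_{d\in g_j}\beta_d h_d]$ near $\beta = 0$. By the chain rule and the gradient identity $b^{S}_d = -\angleb{\nabla\mathcal{R}[f_S], h_d}$, the Euclidean gradient is $\nabla_\beta\psi(0) = -b^{G_{j-1}}_{g_j}$, whose squared norm is exactly the right-hand side $\angleb{b^{G_{j-1}}_{g_j}, b^{G_{j-1}}_{g_j}}$ we must produce. The Hessian of $\psi$ is the functional Hessian of $\mathcal{R}$ sandwiched between the group's feature functions, $[\nabla_\beta^2\psi]_{dd'} = \angleb{h_d, \nabla^2\mathcal{R}[f]\, h_{d'}}$, so the strong-smoothness/convexity of $\mathcal{R}$ bounds it by a constant multiple of the group Gram block $(C_{g_j,g_j})_{dd'} = \angleb{h_d, h_{d'}}$.

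This is the step at which whitening pays off. Using the identity $C_{ij} = \frac{1}{n}X_i^TX_j + \lambda I(i=j) = \angleb{h_i, h_j}$ from Section~\ref{sec:functional} together with the whitening assumption, the diagonal block collapses to $C_{g_j,g_j} = (1+\lambda)\,I_{D_{g_j}}$. Hence the quadratic model of $\psi$ has isotropic curvature $M(1+\lambda)$ on group $g_j$; minimizing this model along $-\nabla_\beta\psi(0)$ and invoking that exact minimization beats the model yields $\psi(0) - \min_\beta\psi(\beta) \ge \frac{1}{2M(1+\lambda)}\angleb{b^{G_{j-1}}_{g_j}, b^{G_{j-1}}_{g_j}}$, which combined with the reduction above gives the lemma. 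In the linear case $M=1$ and $\psi$ is exactly quadratic, so the bound is tight and specializes to Lemma~\ref{lemma:convexity}.

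The main obstacle is the curvature step: one must justify the second-order bound for $\psi$ with respect to the nonstandard functional inner product, and in particular recognize that the relevant metric on the $\beta$-parametrization is the Gram block $C_{g_j,g_j}$ rather than the Euclidean metric --- it is only through this block that whitening produces the clean factor $(1+\lambda)$. The secondary care point is the normalization bookkeeping that makes $C_{g_j,g_j} = (1+\lambda)I$ hold exactly; without whitening this block would instead be $\frac{1}{n}X_{g_j}^TX_{g_j} + \lambda I$, whose top eigenvalue can reach $D_{g_j}+\lambda$, which is precisely the degraded constant the paper flags for the non-whitened case.
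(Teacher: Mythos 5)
Your proposal is correct and follows essentially the same route as the paper: reduce the gain to the restricted one-group problem $\min_\beta \mathcal{R}[f_{G_{j-1}} + \beta^T h_{g_j}]$, lower-bound that gain by minimizing the quadratic model with curvature constant $M$ in the functional metric (whose group block is $C_{g_j}$), and use group whitening to collapse $C_{g_j}$ to $(1+\lambda)I$, yielding the constant $\frac{1}{2M(1+\lambda)}$. Your closing remarks on the non-whitened degradation to $D_{g_j}+\lambda$ also match the paper's own note after the proof.
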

\begin{proof}
After the greedy algorithm chooses some group $g_j$ at step $j$, 
  we form $f_{G_j} = \sum _{\alpha _i} \alpha_i^T h_{g_i}$, such that
   \[
    \mathcal{R}[f_G] = \min _{ \alpha _i \in \R^{d_{g_i}}} 
      \mathcal{R}[ \sum _{g_i \in G_j} \alpha_i^T h_{g_i}] \leq
      \min _{\beta \in \R^{d_{g_j}}} 
    \mathcal{R}[f_{G_{j-1}} + \beta h_{g_j}]
   \]
 Setting $\beta = \arg \min _{\beta \in \R^{d_{g_j}}} 
    \mathcal{R}[f_{G_{j-1}} + \beta h_{g_j}]$, using the strongly convex condition at
      $f_{G_{j-1}}$, we have:
 \begin{align*}
    &{} F(G_j) - F(G_{j-1}) \\
    &=  \mathcal{R}[f_{G_{j-1}}] - \mathcal{R}[f_{G_j}] 
    \geq \mathcal{R}[f_{G_{j-1}}] - \mathcal{R}[f_{G_{j-1}} + \beta h_{g_j}] \\ 
    &\geq  \mathcal{R}[f_{G_{j-1}}] - 
      (\mathcal{R}[f_{G_{j-1}}] + 
        \angleb{ \nabla \mathcal{R}[f_{G_{j-1}}] , 
        \beta h_{g_j} } \notag \\
    &\quad + \frac{M}{2} \Vert \beta h_{g_j} \Vert _2^2) \\
    &=  -  \angleb{ \nabla \mathcal{R}[f_{G_{j-1}}] , 
        \beta h_{g_j} }
       - \frac{M}{2} \Vert \beta h_{g_j} \Vert _2^2 \\
    &=  \angleb{{b^{{G_{j-1}}}_{g_j}},  \beta} - \frac{M}{2} \angleb{\beta, C_{g_j} \beta} \\
    &\geq  \frac{1}{2M} \angleb{ b^{{G_{j-1}}}_{g_j}, C_{g_j}^{-1} b^{{G_{j-1}}}_{g_j}} \\
    &=  \frac{1}{2M (1+\lambda)} \angleb{{b^{{G_{j-1}}}_{g_j}}, b^{{G_{j-1}}}_{g_j}}
 \end{align*}
 The last equality holds because each group is whitened, 
 so that $C_{g_j} = (1 + \lambda) I$.
\end{proof}
Note that the $(1+\lambda)$ constant is a result of group whitening, without which
the constant can be as large as $(D_{g_j} + \lambda)$ for  the worst case where
all the $D_{g_j}$ number of features are the same. \\

The proofs above for Lemma~\ref{lemma:smoothness_glm} and~\ref{lemma:convexity_glm} are 
for one-dimensional output responses. They can be easily generalized to multi-dimensional 
responses by replacing 2-norms with Frobenius norms and vector inner-products with ``Frobenius products", i.e., the sum of the products of all elements. \\

\subsection{Proof of Main Theorem}
\label{sec:app-main-proof}
Given Lemma~\ref{lemma:smoothness_glm} and Lemma~\ref{lemma:convexity_glm}, 
the proof of Lemma~\ref{lemma:main} holds with the same analysis with a more 
general constant $\gamma = \frac{m \lambda_{min}(C)}{M(1+\lambda)}$. The following prove our main theorem~\ref{thm:main}. 
 
\begin{proof} (of Theorem~\ref{thm:main}, given Lemma~\ref{lemma:main})
\label{proof:main}
  Define $\Delta _j = F(S_{\angleb{K}}) - F(G_{j-1})$. Then we have 
  $\Delta _j - \Delta_{j+1} = F(G_{j}) - F(G_{j-1})$. By Lemma ~\ref{lemma:main}, we have:
  \begin{align*}
    \Delta_j &= F(S_{\angleb{K}}) - F(G_{j-1})\\
    &\leq \frac{K}{\gamma}
      \lbrack \frac{F(G_{j}) - F(G_{j-1})}{c(g_{j})} \rbrack 
        = \frac{K}{\gamma} \lbrack \frac{\Delta_j - \Delta_{j+1}}{c(g_j)} \rbrack
  \end{align*}
  Rearranging we get 
    $\Delta_{j+1} \leq \Delta_j ( 1 - \frac{\gamma c(g_j)}{K} )$. Unroll we get:
  \begin{align*}
    \Delta _{L+1} 
    &\leq 
      \Delta _1 \prod _{j=1}^L (1 - \frac{\gamma c(g_j)}{K})
    \leq \Delta _1 ( \frac{1}{L} \sum _{j=1}^L (1- \frac{\gamma c(g_j)}{K})) ^L\\
    &= \Delta _1 (1 - \frac{B\gamma}{L K})^L < \Delta_1 e^{- \gamma \frac{B}{K}}
  \end{align*}
  By definition of $\Delta_1$ and $\Delta_{L+1}$, we have:
  \begin{align*}
    F(S_{\angleb{K}}) - F(G_{\angleb{B}}) < F(S_{\angleb{K}}) e^{- \gamma \frac{B}{K}}
  \end{align*}
  The theorem follows and linear prediction is the special case that $m = M$.
\end{proof}

\section{Extension to Generalized Linear Model}
\label{sec:extension}
While we only formulated the feature group sequencing problem in linear prediction setting previously, we can extend our algorithm for generalized linear models and multi-dimensional responses. In general, we assume that we have $P$ dimensional responses, and predictions are of the form $E[y | x ] = \nabla \phi( W x)$, for some known convex function $\phi : \mathbb{R}^P \rightarrow \mathbb{R}$, and an unknown coefficient $P\times D$ matrix, $W$. Thus, the generalized linear
prediction problem is to minimize over coefficient matrix $W: P\times D$:
\begin{align}
	\textbf{r}(W) = \frac{1}{n} \sum _{i=1}^n (\phi(Wx^i) - y_i^TWx_i) 
		+ \frac{\lambda}{2} \Vert W\Vert^2_F,
		\label{eq:glm_loss}
\end{align}
where $\lambda$ is the regularization constant for Frobenius norm of the coefficient 
matrix. In particular, we have $\phi(x) = \frac{1}{2}x^2$ for linear prediction. 
The risk of a collection of features, $S$, is then 
\mbox{$R(S) = \underset{W : \forall g \notin S  W_g = \textbf{0} }{\min} \textbf{r}(W)$}. 
To extend CS-G-OMP to feature sequencing in this general setting, we again, at each step,
take gradient of the objective $\textbf{r}$ w.r.t. $W$, and choose the feature group that has the largest ratio of group gradient Frobenius norm square to group cost. More specifically, after choosing groups in $G$, we have a best coefficient matrix restricted to G, $W(G)$. Then we compute the gradient w.r.t. $W$ at $W(G)$ (we keep the convention that unselected groups have zero coefficients) as:
\begin{align}
	\nabla \textbf{r}(W) = \frac{1}{n} \sum _{i=1}^n (\nabla \phi(Wx^i) - y_i)x_i^T + \lambda W;
	\label{eq:glm_gradient}
\end{align}
we then evaluate $\Vert \textbf{r}(W)_g \Vert_F^2 / c(g)$ for each feature group $g$, and add the maximizer to the selected groups to create new models. 
Algorithm~\ref{algo:gomp_glm} demonstrates the procedure. 

Our theoretical result Theorem~\ref{thm:main} can also be proven in this general setting. 
Proofs of Lemma \ref{lemma:smoothness} and \ref{lemma:convexity}) in appendix are readily for generalized linear models\footnote{Inner products, $\angleb{\bullet, \bullet}$, in Lemma \ref{lemma:smoothness} and \ref{lemma:convexity} now represent Frobenius products, which are sums of element-wise products of matrices.}. Given these two lemmas, our proofs of Lemma~\ref{lemma:main}
and Theorem~\ref{thm:main} hold as they are.

\IncMargin{1em}
\begin{algorithm}[h]
  \SetKwInOut{Input}{input}\SetKwInOut{Output}{output}

  \Input{The data matrix $\XX = [ \textbf{f}_1, ..., \textbf{f}_D ] \in \R^{n \times D}$,
    with group structures, such that for each group $g$,
    $\XX^T_{g}\XX_{g} = I _{D_g}$.     
    The cost $c(g)$ of each group $g$. 
    The response matrix $\YY \in \{0, 1\}^{n \times P}$.
    The link function $\nabla \Phi$.
    Regularization constant $\lambda$.
  }
  \Output{
    A sequence $((G_j, W_j))_j$, where 
      $G_j = (g_1, g_2, ..., g_j)$ 
    is the sequence of first $j$ selected feature groups, $g_1, g_2, ..., g_j$, and 
      $W_j: P\times D$ restricted to features in $G_j$ 
      is the associated coefficient matrix.
  }

  $G_0 = \emptyset$; $W_0 = \textbf{0}$\;
  \For{$j = 1, 2, ...$}{
    Compute $\textbf{r}' = \nabla \textbf{r}(W_{j-1})$ with Eq.~\ref{eq:glm_gradient}\;
    \tcp{Selection step (*)} 
    $g_j = \arg \max _g \Vert \textbf{r}'_g \Vert_F^2 / c(g)$\;
    \tcp{Append selected group}
    $G_j = G_{j-1} \oplus (g_j)$\;
    \tcp{Solve for the best model with selected feature}
    Use a GLM algorithm to minimize Eq.~\ref{eq:glm_loss} restricted to 
    features in $G_j$\\ 
    \Indp $W_{j} = \underset{W : \forall g \notin G_j W_g = \textbf{0} }{\arg \min} R(W) $\;
  }
  \caption{Cost Sensitive Group Orthogonal Matching Pursuit (G-OMP) for Generalized Linear Models}
  \label{algo:gomp_glm}
\end{algorithm}

\end{document}